\documentclass{article} 
\usepackage[dvipsnames,svgnames]{xcolor}
\usepackage{arxiv,times,natbib}


\usepackage{amsmath,amsfonts,bm}









\def\eqref#1{equation~\ref{#1}}









\def\1{\bm{1}}








\def\va{{\bm{a}}}

\def\ve{{\bm{e}}}

\def\vk{{\bm{k}}}

\def\vq{{\bm{q}}}

\def\vv{{\bm{v}}}

\def\vx{{\bm{x}}}
\def\vy{{\bm{y}}}
\def\vz{{\bm{z}}}


\def\mA{{\bm{A}}}

\def\mE{{\bm{E}}}
\def\mF{{\bm{F}}}

\def\mH{{\bm{H}}}

\def\mS{{\bm{S}}}

\def\mV{{\bm{V}}}

\def\mX{{\bm{X}}}
\def\mY{{\bm{Y}}}
\def\mZ{{\bm{Z}}}

\DeclareMathAlphabet{\mathsfit}{\encodingdefault}{\sfdefault}{m}{sl}
\SetMathAlphabet{\mathsfit}{bold}{\encodingdefault}{\sfdefault}{bx}{n}




\def\sZ{{\mathbb{Z}}}








\newcommand{\E}{\mathbb{E}}



\DeclareMathOperator*{\argmax}{arg\,max}

\newcommand{\cossim}[2]{\cos\left\langle {#1},{#2} \right\rangle}
\newcommand{\llnorm}[1]{\left\lVert{#1}\right\rVert_2}
	
\newcommand{\indep}{\perp \!\!\! \perp}

\newcommand{\meanstd}[2]{{#1}\tiny{$\pm${#2}}}

\usepackage{hyperref}
\hypersetup{
    colorlinks=true,
    citecolor=DarkBlue
}
\usepackage{url}
\usepackage{booktabs}
\usepackage{graphicx}
\usepackage{algorithm} 
\usepackage{algpseudocode}
\usepackage{varwidth}
\usepackage{multicol}
\usepackage{multirow}
\usepackage{caption}
\usepackage{subcaption}
\usepackage{wrapfig}
\usepackage{bbm}
\usepackage{amsmath,amssymb}

\usepackage{amsthm}
\newtheorem{theorem}{Theorem}[section]

\newtheorem{lemma}[theorem]{Lemma}
\newtheorem{definition}[theorem]{Definition}
\newtheorem{assumption}[theorem]{Assumption}

\def\modelfullname{Counterfactual Self-Supervised Transformer}
\def\modelshortname{COSTAR}

\title{\modelshortname: Improved Temporal Counterfactual Estimation with Self-Supervised Learning}


\author{Chuizheng Meng \thanks{University of Southern California}\hspace{0.5em}\thanks{Google Cloud AI Research}\\
\texttt{chuizhem@usc.edu} \\
\And
Yihe Dong \footnotemark[2] \\
\texttt{yihed@google.com} \\
\And
Sercan \"{O}. Ar{\i}k \footnotemark[2] \\
\texttt{soarik@google.com} \\
\AND
Yan Liu \footnotemark[1]\hspace{0.5em}\footnotemark[2] \\
\texttt{yanliu.cs@usc.edu}
\And
Tomas Pfister \footnotemark[2]\\
\texttt{tpfister@google.com} \\
}

\makeatletter
\@newctr{footnote}[page]
\makeatother
%


\newcommand{\rebuttal}[1]{{#1}}

\newcommand\blfootnote[1]{
    \begingroup
    \renewcommand\thefootnote{}\footnote{#1}
    \addtocounter{footnote}{-1}
    \endgroup
}

\begin{document}

\maketitle

\begin{abstract}
Estimation of temporal counterfactual outcomes from observed history is crucial for decision-making in many domains such as healthcare and e-commerce, particularly when randomized controlled trials (RCTs) suffer from high cost or impracticality. For real-world datasets, modeling time-dependent confounders is challenging due to complex dynamics, long-range dependencies and both past treatments and covariates affecting the future outcomes. In this paper, we introduce \modelfullname~(\modelshortname), a novel approach that integrates self-supervised learning for improved historical representations. We propose a component-wise contrastive loss tailored for temporal treatment outcome observations and explain its effectiveness from the view of unsupervised domain adaptation. \modelshortname~yields superior performance in estimation accuracy and generalization to out-of-distribution data compared to existing models, as validated by empirical results on both synthetic and real-world datasets.
\blfootnote{Code available at \url{https://github.com/google-research/google-research/tree/master/COSTAR}}
\end{abstract}

\section{Introduction}

Accurate estimation of treatment outcomes over time conditioning on the observed history is a fundamental problem in causal analysis and decision making in various applications \citep{mahar2021scoping,ye2023web,wang2023multi}. For example, in medical domains, doctors are interested in knowing how a patient reacts to a treatment or multi-step treatments; in e-commerce, retailers are concerned about how future sales change if adjusting the price of an item. While randomized controlled trials (RCTs) are the gold standard for treatment outcome estimation, most often than not such trials are either too costly or even impractical to conduct. Therefore, utilizing available observed data (such as electronic health records (EHRs) and historical sales) for accurate treatment outcome estimation, has drawn increasing interest in the community.

Compared to the well-studied i.i.d cases, treatment outcome estimation from time series observations not only finds more applications in the real world but also pose significant more challenges, due to the complex dynamics and the long-range dependencies in time series.
Existing works along this endeavors explore various architectures with improved capacity and training strategies to alleviate time-dependent confounding\footnote{We leave a more detailed review of related work in Sec.~\ref{sec:related-work} of the appendix.}. Recurrent marginal structural networks (RMSNs)~\citep{lim2018forecasting}, counterfactual recurrent networks (CRN)~\citep{Bica2020Estimating}, and G-Net~\citep{li2021g} utilize architectures based on recurrent neural networks. To mitigate time-dependent confounding, they train proposed models with inverse probability of treatment weighting (IPTW), treatment invariant representation through gradient reversal, and G-computation respectively, in addition to the factual estimation loss on observed data. Causal Transformer (CT)~\citep{melnychuk2022causal} further improves capturing long-range dependencies in the observational data with a tailored transformer-based architecture and overcomes the temporal confounding with balanced representations trained through counterfactual domain confusion loss. 

\begin{wrapfigure}{r}{0.5\textwidth}
    \centering
    \includegraphics[width=\linewidth]{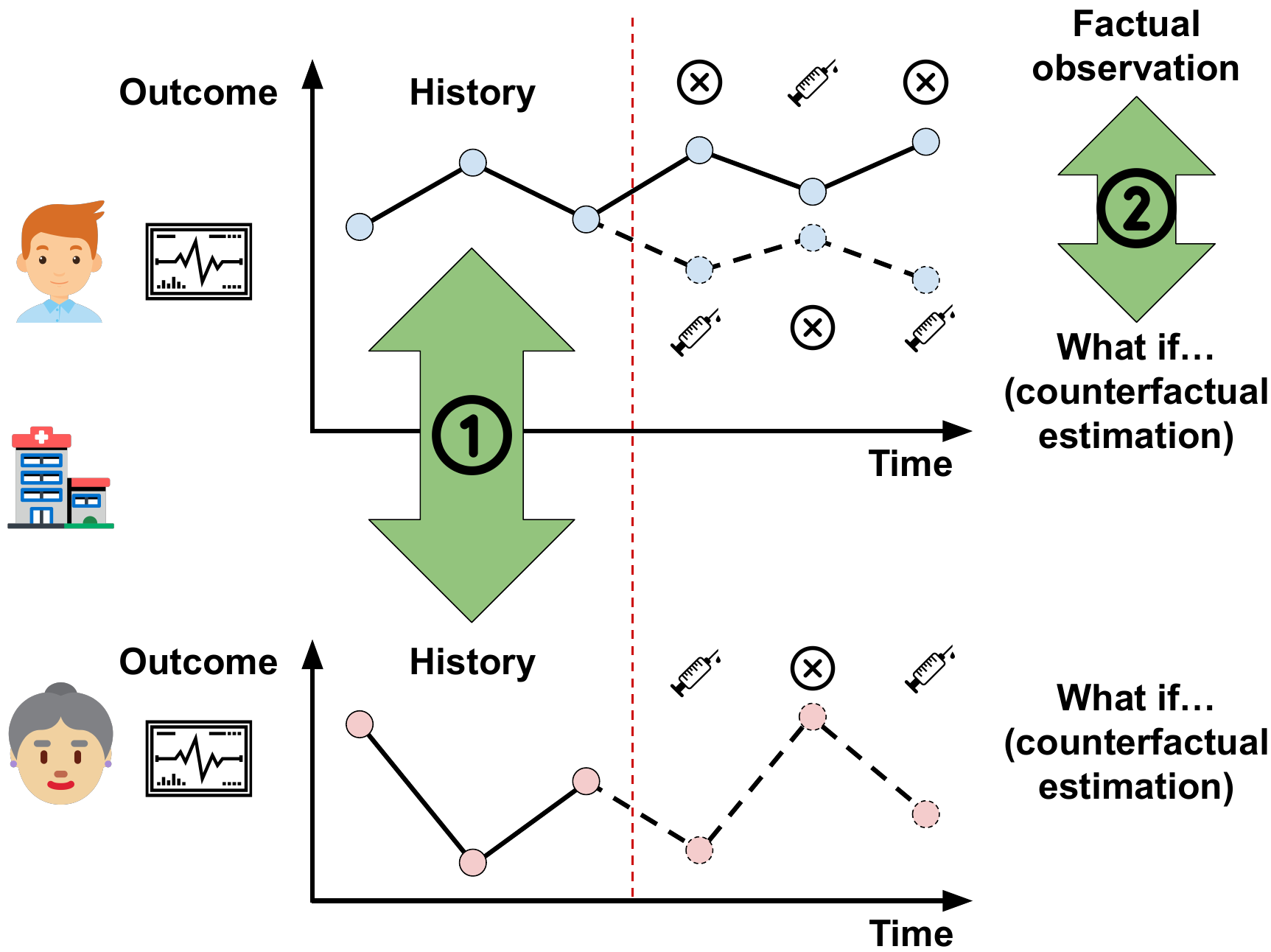}
    \caption{We illustrate the problem of treatment outcome estimation over time with an example in healthcare. 
    We propose \modelshortname~as a temporal counterfactual estimator enhanced with self-supervised learning, inducing transferrability to both \textcircled{1}  cold-start cases from unseen subpopulations and \textcircled{2} counterfactual outcome estimation.}
    \label{fig:ssl-cold-start}
\end{wrapfigure}

While existing methods achieve performance gain in empirical evaluation, they rely on the fully supervised loss of future outcomes to learn representations of history and thus suffer from its limitations.
In many practical applications, we are confronted with the cold case challenge, where no or limited observations of testing time series are accessible. 
Figure~\ref{fig:ssl-cold-start} shows an example in healthcare: after training a vital sign estimator with historical health records (seen as outcomes) and drug usage (seen as treatments) sequences of patients in the youth age group, the model is asked to estimate the potential vital signs in the elderly age group after applying a treatment plan, with no or very limited observations of elderly people collected beforehand. 
Existing methods based on supervised learning 
have difficulty generalizing to different domains and handling cold cases in test time. 



In this work, we propose a paradigm shift from supervised learning to self-supervised training for temporal treatment outcome estimation. Our proposed model, \textbf{\modelfullname~(\modelshortname)}, addresses the aforementioned limitations. To enhance the model capacity, we propose an encoder architecture composed of alternating temporal and feature-wise attention, capturing dependencies among both time steps and features. To learn expressive and transferable representations of the observed history, we refine the contrastive loss in self-supervised learning to a finer-grained level: both the entire history and each of the covariate/treatment/outcome components are contrasted when constructing the loss. Moreover, we view the counterfactual outcome estimation problem from the unsupervised domain adaptation (UDA) perspective and provide the theoretical analysis of the error bound for a counterfactual outcome estimator that gives estimation based on representations from self-supervised learning.

Our main contributions are summarized as follows:
\begin{enumerate}
    \item We adapt self-supervised learning (SSL) together with component-wise contrastive losses tailored for temporal observations to learn more expressive representations of the history in temporal counterfactual outcome estimation.
    \item We explain the boost from self-supervised learning on the counterfactual outcome estimation problem with the view of unsupervised domain adaptation (UDA) perspective and provide the theoretical analysis of the error bound of a counterfactual outcome estimator that predicts with representations from self-supervised learning.
    \item Empirical results show that our proposed framework outperforms existing baselines across both synthetic and real-world datasets in both estimation accuracy and generalization. In addition, we demonstrate that the learned representations are balanced towards treatments and thus address the temporal confounding issue.
\end{enumerate}

\section{Related Work}
\label{sec:related-work}
\paragraph{Counterfactual treatment outcome estimation over time.}
Early works in counterfactual treatment outcome estimation were first developed for epidemiology and can be considered under 3 major groups: G-computation, marginal structural models (MSMs), and structural nested models~\citep{robins1986new,robins1994correcting,robins2000marginal,robins2008estimation}. One major shortcoming of these is that they are built on linear models and suffer from the limited model capacity when facing time series data with complex temporal dependencies. Follow-up works address the limitation in expressiveness with Bayesian non-parametric methods \citep{xu2016bayesian,soleimani2017treatment,schulam2017reliable} or more expressive deep neural networks (DNNs) such as recurrent neural networks (RNNs). For example, recurrent marginal structural networks (RMSNs) \citep{lim2018forecasting} replace the linear model in MSM with an RNN-based architecture for forecasting treatment outcomes. G-Net \citep{li2021g} also adopts RNN instead of classical regression models in the g-computation framework.
Inspired by the success of representation learning for domain adaptation and generalization \citep{ganin2016domain,tzeng2015simultaneous}, more recent works explore learning representations that are both predictive for outcome estimation and balanced regardless of the treatment bias in training data. Counterfactual recurrent network (CRN) \citep{Bica2020Estimating} trains an RNN-based model with both the factual outcome regression loss and the gradient reversal \citep{ganin2016domain} w.r.t. the treatment prediction loss. The former loss encourages the learned representations to be predictive of outcomes while the latter encourages the representations to be homogeneous given different treatments. The joint training target leads to informative and balanced representations. With similar motivations, \citep{melnychuk2022causal} replaces the RNN-based architecture with a Transformer-based \citep{vaswani2017attention} one along with the domain confusion loss \citep{tzeng2015simultaneous} to learn treatment-agnostic representations. Given the flexibility of the choice of model architectures, recent works extend temporal counterfactual outcome estimation to irregular time series~\citep{seedat2022continuous,cao2023estimating}, temporal point process \citep{zhang2022counterfactual}, and graph-structured spatiotemporal data~\citep{jiang2023cf} with the help of \citep{kidger2020neural} and \citep{huang2020learning}. While existing works claim that both predictive and balanced representations are critical in accurate counterfactual outcome estimation, we empirically find that the impact of representation balancing is inconsistent and marginal. In contrast, improving the expressiveness of representations brings more robust improvements.

\paragraph{Self-supervised learning of time series.}
Being widely studied first for computer vision tasks \citep{he2020momentum,chen2020simple,grill2020bootstrap,chen2021mocov3}, self-supervised learning achieves strong performance with the advantage of not relying on labeled data. Recent works \citep{yue2022ts2vec,tonekaboni2021unsupervised,woo2022cost,zhang2022self} further generalize and adapt self-supervised learning methods for time series, including classification, forecasting, and anomaly detection tasks. However, existing works of counterfactual outcome estimation over time have neglected self-supervised learning of time series as an effective way of learning informative representations. Meanwhile, existing models for self-supervised learning of time series are not tailored for counterfactual outcome estimation. Hence we propose \modelshortname~to mitigate the gap.
\section{Problem Formulation}
\label{sec:formulation}
Our task is estimating the outcomes of subjects with observed history after being applied a sequence of treatments from observational data \citep{lim2018forecasting,Bica2020Estimating,melnychuk2022causal}. We represent the available observed dataset as $\{\{\vx^{(i)}_t, \va^{(i)}_t,\vy^{(i)}_t\}_{t=1}^{T^{(i)}}, \vv^{(i)}\}_{i=1}^N$ of $N$ independently sampled subjects, where $T^{(i)}\in\mathbb{N}^+$ denotes the length of the observed history of subject $i$, and $\vx^{(i)}_t\in\mathbb{R}^{d_X}$, $\va^{(i)}_t\in\mathbb{R}^{d_A}$, and $\vy^{(i)}_t\in\mathbb{R}^{d_Y}$ stand for the observed vector of covariates, treatments, and outcomes respectively, at time $t$ of subject $i$. $\vv^{(i)}\in\mathbb{R}^{d_V}$ contain all static features of subject $i$. We omit the subject index $i$ in the following text for notational simplicity.

Following the potential outcomes~\citep{splawa1990application,rubin1978bayesian} framework extended to time-varying treatments and outcomes~\citep{robins2008estimation}, our target is to estimate $\E(\vy_{t+\tau}[\bar{\va}_{t:t+\tau-1}] \vert \bar{\mH}_t)$ for $\tau \geq 1$, where $\bar{\mH}_t = (\bar{\mX}_t, \bar{\mA}_{t-1}, \bar{\mY}_t, \mV)$ is the observed history. $\bar{\mX}_t = (\vx_1, \vx_2, \dots, \vx_t)$, $\bar{\mA}_{t-1} = (\va_1, \va_2, \dots, \va_{t-1})$, $\bar{\mY}_t = (\vy_1, \vy_2, \dots, \vy_t)$, $\mV = \vv$. $\bar{\va}_{t:t+\tau-1} = (\va_t, \va_{t+1}, \dots, \va_{t+\tau - 1})$ is the sequence of the applied treatments in the future $\tau$ discrete time steps.
In factual data, $\bar{\mH}_t$ and $\va_t$ are correlated, leading to the treatment bias in counterfactual outcome estimation. In addition, the distribution of $\bar{\mH}_t$ can also vary between training and test data: $P_{\mathcal{D}_{tr}}(h) \neq P_{\mathcal{D}}(h)$, causing the feature distribution shifts. Following the tradition in domain adaptation, we name $P_{\mathcal{D}_{tr}}(h)$ and $P_{\mathcal{D}}(h)$ the source/target domains respectively.  Table~\ref{tab:data-stat} describes feature distribution shifts in our datasets.

To ensure the identifiability of treatment effects from observational data, we take the standard assumptions used in existing works~\citep{Bica2020Estimating,melnychuk2022causal}:
(1) consistency, (2) positivity and (3) sequential strong ignorability (See Appendix~\ref{sec:identifiability}).




\section{\modelfullname}

We illustrate the detailed design of our proposed \modelfullname~(\modelshortname). Our main goal is to learn representations of observed history sequences that are informative for counterfactual treatment outcome estimation, which we achieve by tailoring both the representation encoder architecture and the self-supervised training loss. On top of the representation learning, we also propose a simple yet effective decoder for non-autoregressive outcome prediction and demonstrate improvements in both the accuracy and the speed of multi-step estimation. Fig.~\ref{fig:overview} overviews the proposed framework.

\begin{figure}[t]
    \centering
    \begin{subfigure}[b]{\linewidth}
        \includegraphics[width=\linewidth]{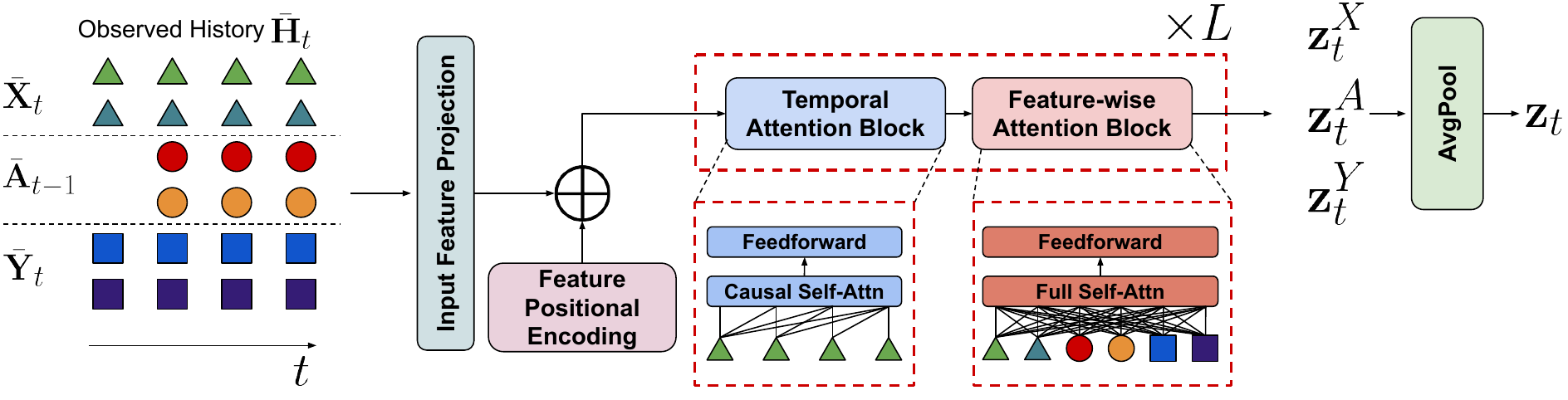}
        \caption{Encoder architecture.}
        \label{fig:encoder}
    \end{subfigure}
    \begin{subfigure}[b]{0.45\linewidth}
        \includegraphics[width=\linewidth]{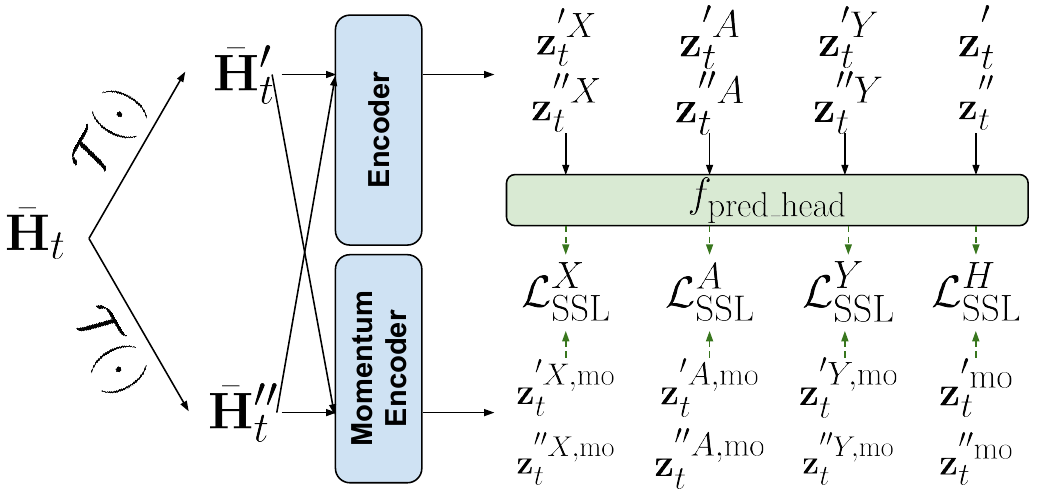}
        \caption{Self-supervised learning of history.}
        \label{fig:ssl-overview}
    \end{subfigure}
    \hspace{0.05\textwidth}
    \begin{subfigure}[b]{0.45\linewidth}
        \includegraphics[width=\linewidth]{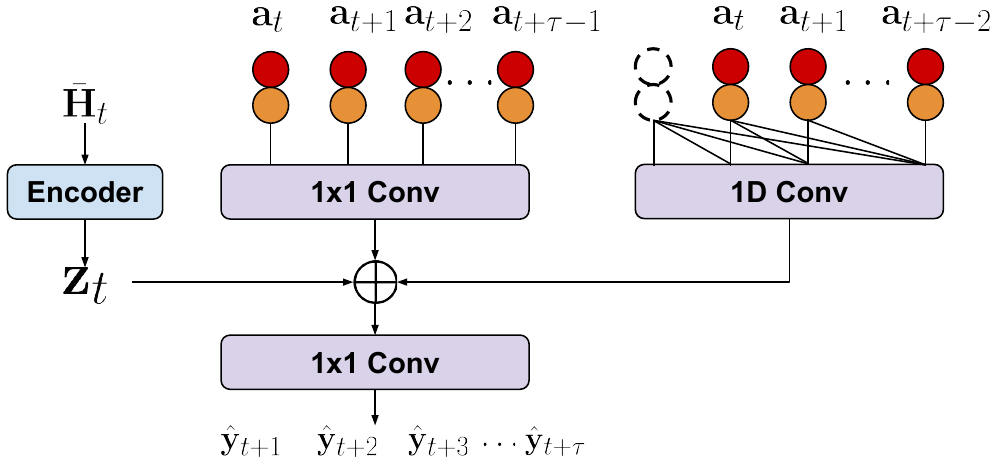}
        \caption{Non-autoregressive outcome predictor.}
        \label{fig:predictor}
    \end{subfigure}
    \caption{Overview of \modelshortname. 
    (a) Encoder architecture.
    The Temporal Attention Block applies temporal causal attention along the time dimension in parallel for each feature, while the Feature-wise Attention Block calculates full self-attention along the feature dimension in all time steps.
    (b) Self-supervised learning of the history representations. Positive pairs are generated by applying random transformations $\mathcal{T}(\cdot)$ on the same sample. We construct component-wise contrastive losses of historical covariates, treatments and outcomes in addition to the standard contrastive loss of the entire sequence. 
    (c) Non-autoregressive outcome predictor architecture.
    }
    \label{fig:overview}
\end{figure}

\subsection{Encoder architecture}
\label{sec:encoder_arch}

For a given sequence of the observed history $\bar{\mH}_t\in\mathbb{R}^{t\times d_{\text{input}}}$ concatenated from $(\bar{\mX}_t, \bar{\mA}_{t-1}, \bar{\mY}_t)$ (we omit static variables $\mV$ here for simplicity and leave the processing details in Sec.~\ref{sec:encoder_arch}; $d_\text{input} = d_X + d_A + d_Y$), the encoder in Fig.~\ref{fig:encoder} maps the entire history to representations $\{\vz^{i}_t \in \mathbb{R}^{d_{\text{model}}}\}_{i=1}^{d_{\text{input}}}$ for each feature $f_i$.
Then, we employ average pooling for feature-wise representations from corresponding features to get the representations of covariate, treatment, and outcome components, and all features for the representation of the entire observed history. Denote the set of covariate, treatment, and outcome variables as $\mathbf{F}_X$, $\mathbf{F}_A$, and $\mathbf{F}_Y$ respectively. We have:
\begin{equation}
\label{eq:rep-agg}
\begin{array}{ll}
    \vz^{X}_t = \text{avg}(\{\vz^i_t\}_{f_i\in\mathbf{F}_X}),&\vz^{A}_t = \text{avg}(\{\vz^i_t\}_{f_i\in\mathbf{F}_A}),\\
    \vz^{Y}_t = \text{avg}(\{\vz^i_t\}_{f_i\in\mathbf{F}_Y}),&\vz_t = \text{avg}(\{\vz^i_t\}_{i=1}^{d_{\text{input}}}).
\end{array}
\end{equation}

We describe the detailed design of the encoder architecture in Appendix~\ref{sec:detail-encoder}.

\subsection{Self-supervised representation learning of the observed history}
We employ pretraining for the encoder in a self-supervised way with the contrastive learning objectives $\mathcal{L}^X_{\text{SSL}}$, $\mathcal{L}^A_{\text{SSL}}$, $\mathcal{L}^Y_{\text{SSL}}$, $\mathcal{L}^H_{\text{SSL}}$ for the component representations $\vz^X_t, \vz^A_t, \vz^Y_t$ and the overall representation $\vz_t$ respectively. The overall self-supervised learning loss is given as:
\begin{equation}
    \mathcal{L}_{\text{SSL}} = \mathcal{L}^H_{\text{SSL}} + (\mathcal{L}^X_{\text{SSL}} + \mathcal{L}^A_{\text{SSL}} + \mathcal{L}^Y_{\text{SSL}}) / 3.
\end{equation}

\paragraph{Self-supervised training.} We train our proposed encoder to learn representations of the history with a self-supervised learning framework modified based on MoCo v3~\citep{chen2021mocov3} that achieves the state-of-the-art performance in self-supervised vision transformer training. Following MoCo v3, we set up our proposed encoder $f_\text{enc}$ as combination of a momentum encoder with the same architecture and initial weights $f^{\text{mo}}_\text{enc}$, and a multi-layer perceptron (MLP) as the prediction head $f_\text{pred\_head}: \mathbb{R}^{d_\text{model}}\rightarrow \mathbb{R}^{d_\text{model}}$. We first apply the augmentations in~\citep{woo2022cost}, which includes scaling, shifting and jittering, on each sample in the input batch $\{\bar{\mH}^{(i)}_t\}_{i=1}^{B}$ ($B$ is the batch size) and generates the positive sample pair $\{\bar{\mH}_t^{'(i)}\}_{i=1}^B$, $\{\bar{\mH}_t^{''(i)}\}_{i=1}^B$. Their representations are encoded as follows:
\begin{equation}
\begin{array}{ll}
    \vz_t^{'X(i)}, \vz_t^{'A(i)}, \vz_t^{'Y(i)}, \vz_t^{'(i)} &= f_\text{enc}(\bar{\mH}^{'(i)}_t),\\
    \vz_t^{''X(i)}, \vz_t^{''A(i)}, \vz_t^{''Y(i)}, \vz_t^{''(i)} &= f_\text{enc}(\bar{\mH}^{''(i)}_t),\\
    \vz_t^{'X, \text{mo}(i)}, \vz_t^{'A, \text{mo}(i)}, \vz_t^{'Y, \text{mo}(i)}, \vz_t^{'\text{mo}(i)} &= f_\text{enc}^{\text{mo}}(\bar{\mH}^{'(i)}_t),\\
    \vz_t^{''X, \text{mo}(i)}, \vz_t^{''A, \text{mo}(i)}, \vz_t^{''Y, \text{mo}(i)}, \vz_t^{''\text{mo}(i)} &= f_\text{enc}^{\text{mo}}(\bar{\mH}^{''(i)}_t).
\end{array}
\end{equation}

The vanilla MoCo v3 method adopts the InfoNCE contrastive loss~\citep{oord2018representation} as the training objective:
\begin{equation}
\label{eq:overall-con-loss}
\begin{aligned}
    \mathcal{L}_{\text{SSL}}^H = &\mathcal{L}_\text{InfoNCE}(\{f_\text{pred\_head}(\vz_t^{'(i)})\}_{i=1}^B, \{\vz_t^{''\text{mo}(i)}\}_{i=1}^B) \\
    &+ \mathcal{L}_\text{InfoNCE}(\{f_\text{pred\_head}(\vz_t^{''(i)})\}_{i=1}^B, \{\vz_t^{'\text{mo}(i)}\}_{i=1}^B),
\end{aligned}
\end{equation}where
\begin{equation}
\begin{aligned}
    &\mathcal{L}_\text{InfoNCE}(\{\vq^{(i)}\}_{i=1}^B, \{\vk^{(i)}\}_{i=1}^B) \\
    = &-\frac{1}{B}\sum_{i=1}^B\log\frac{\exp(\cossim{\vq^{(i)}}{\vk^{(i)}})}{\sum_{j=1}^B\exp(\cossim{\vq^{(i)}}{\vk^{(j)}})}.
\end{aligned}
\end{equation}


\paragraph{Component-wise contrastive loss.}
In addition to the contrastive loss of the overall representations, we enhance the training with contrastive losses on each subset of covariates, treatments and outcomes:
\begin{equation}
\begin{aligned}
    &\mathcal{L}_{\text{SSL}}^{(\cdot)} \\
    = &\mathcal{L}_\text{InfoNCE}(\{f_\text{pred\_head}(\vz_t^{'(\cdot)(i)})\}_{i=1}^B, \{\vz_t^{''(\cdot),\text{mo}(i)}\}_{i=1}^B) \\
    &+\mathcal{L}_\text{InfoNCE}(\{f_\text{pred\_head}(\vz_t^{''(\cdot)(i)})\}_{i=1}^B, \{\vz_t^{'(\cdot),\text{mo}(i)}\}_{i=1}^B),
\end{aligned}
\end{equation}
where $(\cdot)$ is $X, A, Y$.

\subsection{Non-autoregressive outcome predictor}
The architecture of the proposed predictor model is shown in Fig.~\ref{fig:predictor}. At the prediction stage, we first encode the observed history $\bar{\mH}_t$ with the pretrained encoder, the treatment $\va_{t' - 1}$ is modeled right before time $t'=t+1,\dots,t+\tau-1$ with a 1x1 convolution layer, and the remaining treatment sequence $(\va_{t}, \va_{t+1}, \dots, \va_{t'-2})$ with a 1D convolution layer. Then, the concatenated encoding is fed into a multi-layer perceptron (MLP) to predict future outcomes $(\hat{\vy}_{t+1}, \hat{\vy}_{t+2}, \dots, \hat{\vy}_{t+\tau})$. We jointly train the predictor layers and fine tune the pretrained encoder with the $L_2$ loss of factual outcome estimation weighted for each step:
\begin{equation}
\label{eq:suploss}
    \mathcal{L}_{\text{est}} = \sum_{i=1}^{\tau} w_i \left\lVert \hat{\vy}_{t+i} - \vy_{t+i}  \right\rVert_2^2,
\end{equation}
where each $w_i$ is a hyperparameter satisfying $\sum_{i=1}^{\tau}w_i = 1$ and various strategies of setting $w_i$s can be selected via validation errors, which we will discuss in the ablation study (Sec.~\ref{sec:ablation}).

\subsection{Unsupervised domain adaptation view of counterfactual outcome estimation}
For the given observed history $\bar{\mH}_t$, treatment sequence $\bar{\va}_{t:t+\tau-1}$ to apply, and the outcome $\vy_{t+\tau}[\va_{t:t+\tau-1}]( \bar{\mH}_t)$ to estimate, we notice that learning a counterfactual treatment outcome estimator $f_{\va}(\bar{\mH}_t) = \E(\vy_{t+\tau}[\va] \vert \bar{\mH}_t)$ with factual data specifically for a certain treatment sequence $\va$ can be viewed as an unsupervised domain adaptation (\textbf{UDA}) problem with the treatment value being discrete -- any sample in the factual dataset can be categorized into one of the two subsets (1) \textbf{$\mathcal{S}_{\va}=\{(\bar{\mH}_t^{(i)}, \bar{\va}_{t:t+\tau-1}^{(i)}, \vy_{t+\tau}^{(i)})\}_{\bar{\va}^{(i)}_{t:t+\tau-1} = \va}$} and (2) \textbf{$\mathcal{T}_{\bar{\va}}=\{(\bar{\mH}_t^{(i)}, \bar{\va}_{t:t+\tau-1}^{(i)}, \vy_{t+\tau}^{(i)})\}_{\bar{\va}^{(i)}_{t:t+\tau-1} \neq \va}$}. With Assumption \ref{asm:consistency}, we have $\vy_{t+\tau}^{(i)}=\vy_{t+\tau}[\va](\bar{\mH}_t^{(i)})$ in $\mathcal{S}_{\va}$ and thus $\vy_{t+\tau}^{(i)}$ is a label of $f_{\va}(\bar{\mH}_t)$. This does not hold for $\mathcal{T}_{\va}$, where $\bar{\va}^{(i)}_{t:t+\tau-1}\neq\va$. Therefore, $\mathcal{S}_{\va}$ and $\mathcal{T}_{\va}$ correspond to the \textbf{labeled} and the \textbf{unlabeled} dataset in UDA. Considering the existence of treatment bias,  $P_{\mathcal{T}_{\va}}(\bar{\mH}_t) = P(\bar{\mH}_t \vert \bar{\va}_{t:t+\tau-1} \neq \va) \neq P(\bar{\mH}_t \vert \bar{\va}_{t:t+\tau-1} = \va) = P_{\mathcal{S}_{\va}}(\bar{\mH}_t)$, which corresponds to the distribution shift between the labeled source domain and the unlabeled target domain in UDA.
\rebuttal{Notice that the source/target domains here are used for describing the labeled and unlabeled subsets regarding a treatment value to help us analyze the error of counterfactual outcome estimation in UDA framework, and are different from the definitions we use in Section~\ref{sec:formulation}. In the latter case, source/target domains describe the different distributions of $\bar{\mH}_t$ between train/test data.}
As a natural generalization, we analyze the upper bound of contrastive learning for counterfactual outcome estimation based on the transferability analysis of contrastive learning in UDA \citep{haochen2022beyond}:
\begin{theorem}[Upper bound of counterfactual outcome estimator]
\label{theorem:upper-bound-cont}
Suppose that Assumptions~\ref{asm:cross-cluster},~\ref{asm:intra-cluster}, and \ref{asm:relative-expansion} hold for the set of observed history $\mathcal{H}$ and its positive-pair graph $G(\mathcal{H}, w)$, and the representation dimension $k\geq 2m$. Let $r$ be a minimizer of the generalized spectral contrastive loss on factual data and the regression head $f_\va$ be constructed in Alg. \ref{alg:pfa} with labeled data. We have
\begin{equation}
\begin{aligned}
    \mathcal{E}_\mathcal{D}(f_\va) \lesssim &P(\va)\mathcal{E}_{\mathcal{S}_\va}(f_\va) + (1 - P(\va))\\
    &\cdot\left[\epsilon^2 + (4B^2 - \epsilon^2)\frac{r}{\alpha^2\gamma^4}\cdot\exp(-\Omega(\frac{\rho \gamma^2}{\alpha^2}))\right],
\end{aligned}
\end{equation}
\end{theorem}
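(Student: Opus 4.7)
The plan is to decompose the total counterfactual risk $\mathcal{E}_\mathcal{D}(f_\va)$ over the whole population according to whether the observed treatment sequence matches $\va$, and then invoke the unsupervised domain adaptation (UDA) transfer bound for spectral contrastive learning of \citet{haochen2022beyond} on the ``unmatched'' portion. Concretely, first I would write
\begin{equation*}
\mathcal{E}_\mathcal{D}(f_\va) = P(\va)\,\mathcal{E}_{\mathcal{S}_\va}(f_\va) + (1-P(\va))\,\mathcal{E}_{\mathcal{T}_\va}(f_\va),
\end{equation*}
where $\mathcal{S}_\va$ and $\mathcal{T}_\va$ are exactly the labeled/unlabeled subpopulations introduced just above the theorem. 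The first summand is already in the desired form, so the whole task reduces to controlling $\mathcal{E}_{\mathcal{T}_\va}(f_\va)$: the error incurred when predicting the counterfactual outcome for histories whose factual treatment sequence is not $\va$. By Assumption~\ref{asm:consistency} (consistency) the factual outcomes on $\mathcal{S}_\va$ agree with the potential outcomes $\vy_{t+\tau}[\va](\bar{\mH}_t)$, so $\mathcal{S}_\va$ genuinely plays the role of a labeled source domain for the regression target $f_\va$, and by sequential strong ignorability the conditional law of $\vy_{t+\tau}[\va]$ given $\bar{\mH}_t$ coincides across $\mathcal{S}_\va$ and $\mathcal{T}_\va$. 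This is the legitimization step that lets me map the counterfactual problem onto a standard UDA problem.

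Next, I would instantiate the transferability machinery: the encoder $r$ is a minimizer of the generalized spectral contrastive loss on the pooled factual data, and the head $f_\va$ is built by the PFA procedure (Algorithm~\ref{alg:pfa}) from the labels available on $\mathcal{S}_\va$ only. Under Assumptions~\ref{asm:cross-cluster}--\ref{asm:relative-expansion} on the positive-pair graph $G(\mathcal{H},w)$ over histories, and with representation dimension $k\ge 2m$, \citet[Theorem on UDA with contrastive pretraining]{haochen2022beyond} gives a target-domain error bound of the form
\begin{equation*}
\mathcal{E}_{\mathcal{T}_\va}(f_\va) \;\lesssim\; \epsilon^2 + (4B^2 - \epsilon^2)\,\frac{r}{\alpha^2\gamma^4}\,\exp\!\left(-\Omega\!\left(\frac{\rho\gamma^2}{\alpha^2}\right)\right),
\end{equation*}
where $\epsilon$ is the intra-cluster Lipschitz slack, $\gamma$ the cluster separation, $\alpha$ the relative expansion constant, $\rho$ the cross-cluster connectivity, and $B$ the Lipschitz bound on labels. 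Plugging this into the decomposition above and adding the $P(\va)\mathcal{E}_{\mathcal{S}_\va}(f_\va)$ term gives the stated inequality.

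The main obstacle is verifying that the hypotheses of the cited UDA theorem actually hold in our setting: in particular, that the treatment-induced split $(\mathcal{S}_\va, \mathcal{T}_\va)$ of factual histories satisfies the ``clusters across domains'' structure (Assumption~\ref{asm:cross-cluster}) and the relative expansion condition (Assumption~\ref{asm:relative-expansion}) on the positive-pair graph induced by the augmentations $\mathcal{T}(\cdot)$ used by the encoder. This requires that the augmentations act within latent outcome-relevant clusters rather than across treatment assignments, so that contrastive pretraining on combined factual data yields a representation aligned across $\mathcal{S}_\va$ and $\mathcal{T}_\va$. Once these assumptions are granted (as the theorem statement does), the remaining calculations are essentially bookkeeping: combining the two-way decomposition, applying the cited bound on the unlabeled half, and using the Lipschitz bound $B$ on the regression target to control the worst-case substitution error $4B^2$ in the unreachable portion of the target domain.
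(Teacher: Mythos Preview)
Your high-level plan---decompose $\mathcal{E}_\mathcal{D}(f_\va)$ into the $\mathcal{S}_\va$ and $\mathcal{T}_\va$ contributions, then control the $\mathcal{T}_\va$ piece via the transfer result of \citet{haochen2022beyond}---matches the paper's approach. However, there is a genuine gap in the step where you invoke the cited UDA theorem.

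The theorem you cite from \citet{haochen2022beyond} is a \emph{classification} guarantee: under Assumptions~\ref{asm:cross-cluster}--\ref{asm:relative-expansion} and $k\ge 2m$, it bounds the target-domain \emph{0--1 error} of the PFA head,
\[
\mathcal{E}^{01}_{\mathcal{T}_\va}(f_\va)\;\lesssim\;\frac{r}{\alpha^2\gamma^4}\,\exp\!\left(-\Omega\!\left(\tfrac{\rho\gamma^2}{\alpha^2}\right)\right).
\]
It does \emph{not} directly yield an $L_2$ regression bound, and in particular the prefactor $\epsilon^2+(4B^2-\epsilon^2)$ is not supplied by that theorem. The paper obtains it through two ingredients you have omitted: (i) Algorithm~\ref{alg:pfa} discretizes the continuous outcome into $\epsilon$-buckets so that the cited classification result becomes applicable, and (ii) a separate bridging lemma shows that for any predictor with $\llnorm{f(h)},\llnorm{y}\le B$,
\[
\mathcal{E}_{\mathcal{T}_\va}(f_\va)\;\le\;\epsilon^2+(4B^2-\epsilon^2)\,\mathcal{E}^{01}_{\mathcal{T}_\va}(f_\va),
\]
by splitting the squared error according to whether $\llnorm{y-f(h)}>\epsilon$. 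Without this lemma the $\epsilon$ and $B$ in the final statement have no provenance, and your description of them as an ``intra-cluster Lipschitz slack'' and a ``Lipschitz bound on labels'' is not what they are: $\epsilon$ is the discretization granularity chosen in Algorithm~\ref{alg:pfa}, and $B$ is simply a uniform bound on outcome values. Similarly, your glosses on $\alpha,\gamma,\rho$ are permuted relative to the actual assumptions ($\alpha$ bounds cross-cluster max-expansion, $\gamma$ lower-bounds intra-cluster conductance, $\rho$ is the relative-expansion parameter).

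In short: add the classification-to-regression reduction explicitly. Once you state the discretization in Algorithm~\ref{alg:pfa}, prove the one-line inequality $\mathcal{E}_T(f)\le \epsilon^2+(4B^2-\epsilon^2)\mathcal{E}_T^{01}(f)$, and then apply the cited 0--1 bound, your decomposition argument goes through exactly as the paper's does.
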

where $P(\va)$ is the prior probability of the treatment $\va$ to apply. $\mathcal{E}_{\mathcal{S}_\va}(f_\va)$ is the outcome estimation error of $f_\va$ in $\mathcal{S}_{\va}$, which can be optimized with supervised learning. $\alpha$, $r$, $\gamma$, $\rho$ are parameters in Assumptions~\ref{asm:cross-cluster},~\ref{asm:intra-cluster},~\ref{asm:relative-expansion}. $\epsilon$ is the hyperparameter in Alg. \ref{alg:pfa}. $B$ is the upper bound of outcome and predicted outcome values\footnote{Bounded outcome values can be achieved through normalization.}.
When $\gamma \geq \alpha^{1/2}$ and $\rho$ is comparable to $\alpha$, $\rho\gamma^2 \gg \alpha^2$ and lead to a small upper bound.

\textbf{Sketch of the proof}\quad
For the given observed history $\bar{\mH}_t$, treatment sequence $\bar{\va}_{t:t+\tau-1}$ to apply, and the outcome $\vy_{t+\tau}[\va_{t:t+\tau-1}]( \bar{\mH}_t)$ to estimate, we slightly abuse scalar/vector/matrix notations and denote them as $h, a, y[a](h)$ for simplicity. With discrete treatments\footnote{For a sequence of discrete treatments, we can always map it to a single discrete variable with a proper encoding.}, we notice that the counterfactual outcome prediction of each type of treatments from factual data can be viewed as an unsupervised domain adaptation (\textbf{UDA}) problem:

For a treatment type $a$, we aim at finding a function $f_a(h)$ that specifically estimates $\E(y[a](h))$. Any sample $(h_i, a_i, y_i)$ from the observed dataset $\mathcal{D}_{tr}$ can be categorized into one of (1) \textbf{Labeled subset $\mathcal{S}_a=\{(h_i, a_i, y_i)\}_{a_i=a}$} and (2) \textbf{Unlabeled subset $\mathcal{T}_a=\{(h_i, a_i, y_i)\}_{a_i\neq a}$}. According to Assumption \ref{asm:consistency}, for any $(h_i, a_i, y_i)\in\mathcal{S}_a$, we have $y_i=y[a](h_i)$ and thus $y_i$ is a label of $f_a(h)$. In contrast, for any $(h_i, a_i, y_i)\in\mathcal{T}_a$, $y_i=y[a_i](h_i)$ and $a_i\neq a$, resulting in that $y_i$ is no longer a valid label of $f_a(h)$. For simplicity, we omit the treatment symbol $a$ as well as the $y_i$ in $\mathcal{T}_a$: $\mathcal{S}_a=\{(h_i, y_i)\}_{a_i=a}$, $\mathcal{T}_a=\{h_i\}_{a_i\neq a}$.

Considering the existence of treatment bias, there exists at least a $a'\neq a$ satisfying $P(h\vert a) \neq P(h\vert a')$, which potentially leads to $P_{\mathcal{T}_a}(h) = P_{\mathcal{D}_{tr}}(h_i\vert a_i\neq a) \neq P_{\mathcal{D}_{tr}}(h_i \vert a_i=a) = P_{\mathcal{S}_a}(h)$. In counterfactual estimation, we aim at minimizing the estimation error without treatment bias:
\begin{equation}
\label{eq:cf-risk}
\vspace{-1ex}
    \mathcal{E}_{\mathcal{D}}(f_a) = \E_{h\sim P_{\mathcal{D}}(h), y\sim P_{\mathcal{D}}(y[a]\vert h)} \ell (f_a(h), y). 
\end{equation}
Here, we focus on the case where no covariate and concept shifts happen across datasets \footnote{The distribution shift between $P_{\mathcal{D}}(h)$ v.s. $P_{\mathcal{D}_{tr}}(h)$~(covariate shift) and $P_{\mathcal{D}}(y[a]\vert h)$ v.s. $P_{\mathcal{D}_{tr}}(y[a]\vert h)$~(concept shift) can be viewed as the general covariate shift/concept shift and fit into existing theories of domain adaptation/generalization \citep{farahani2021brief}.}: $P_{\mathcal{D}}(h) = P_{\mathcal{D}_{tr}}(h) = \sum_{a'}P(a')P_{\mathcal{D}_{tr}}(h\vert a')$, $P_{\mathcal{D}}(y[a]\vert h) = P_{\mathcal{D}_{tr}}(y[a]\vert h) = P_{\mathcal{S}_a}(y[a]\vert h) = P_{\mathcal{T}_a}(y[a]\vert h)$. Then Eq. \ref{eq:cf-risk}~becomes:
\begin{equation}
\label{eq:all-risk}
\begin{aligned}
     \mathcal{E}_{\mathcal{D}}(f_a) = &\sum_{a'}P(a')\E_{h\sim P_{\mathcal{D}_{tr}}(h\vert a'), y\sim P_{\mathcal{D}_{tr}}(y[a]\vert h)}\ell(f_a(h), y)\\
     =& P(a) \underbrace{\E_{h\sim P_{\mathcal{S}_a}(h), y\sim P_{\mathcal{D}_{tr}}(y[a]\vert h)} \ell(f_a(h), y)}_{\mathcal{E}_{\mathcal{S}_a}(f_a)} +\\
     & (1 - P(a)) \underbrace{\E_{h\sim P_{\mathcal{T}_{a}}(h), y\sim P_{\mathcal{D}_{tr}}(y[a]\vert h)}\ell(f_a(h), y)}_{\mathcal{E}_{\mathcal{T}_a}(f_a)}.
\end{aligned}
\end{equation}

So far we can see that for the task of finding an outcome estimator for treatment type $a$, the counterfactual estimation error is bounded by estimation errors on both $\mathcal{S}_a$ (denoted as $\mathcal{E}_{\mathcal{S}_a}(f_a)$) and $\mathcal{T}_a$ (denoted as $\mathcal{E}_{\mathcal{T}_a}(f_a)$). Per our analysis above, $\mathcal{S}_a$ and $\mathcal{T}_a$ corresponds to the labeled source domain data and the unlabeled target domain data in UDA problems, where potential distribution shifts exist between $\mathcal{S}_a$ and $\mathcal{T}_a$ due to treatment bias. While $\mathcal{E}_{\mathcal{S}_a}(f_a)$ can be optimized with supervised learning using factual data in $\mathcal{S}_a$, we cannot directly optimize $\mathcal{E}_{\mathcal{T}_a}(f_a)$ with labeled data directly. 

Recent works~\citep{thota2021contrastive,sagawa2021extending,park2020joint,wang2021tuta} show that contrastive learning, as an effective self-supervised representation learning method, demonstrates strong transferability in UDA and leads to simple state-of-the-art algorithms. Considering the close connection between counterfactual outcome estimation and UDA, we develop our model \modelshortname~based on contrastive learning and our analysis of the counterfactual estimation error bound from the recent work \citep{haochen2022beyond}, where the authors provide theoretical analysis of the transferability of contrastive learning in UDA. 

We provide the complete proof in Appendix \ref{sec:proof}.
\section{Experiments}

\paragraph{Datasets.} Following the common evaluation setup for counterfactual treatment outcome estimation overtime in \citep{lim2018forecasting,Bica2020Estimating,melnychuk2022causal}, we use datasets from both synthetic simulation and real-world observation in our experiments. We provide more detailed dataset description in Appendix~\ref{sec:dataset-description}. \textbf{(1) Tumor growth.} Following previous work~\citep{lim2018forecasting,Bica2020Estimating,melnychuk2022causal}, we run the pharmacokinetic-pharmacodynamic(PK-PD) tumor growth simulation and generates a fully-synthetic dataset. The PK-PD simulation~\citep{geng2017prediction} is a state-of-the-art bio-mathematical model simulating the combined effect of chemotherapy and radiotherapy on tumor volumes. \textbf{(2) Semi-synthetic MIMIC-III.} \citep{melnychuk2022causal} constructs a semi-synthetic dataset by simulating outcomes under endogenous dependencies on time and exogenous dependencies on observational patient trajectories that are high dimensional and contain long-range dependencies. We include it in our evaluation as a more challenging synthetic dataset. \textbf{(3) M5.} The M5 Forecasting dataset~\citep{MAKRIDAKIS20221346} contains daily sales of Walmart stores across three US states, along with the metadata of items and stores, as well as explanatory variables such as price and special events. We transform it to treatment outcome estimation task with the treatment variable of item price, and the outcome variable of the sales of items. Covariate variables include all remaining features. With synthetic data, we report the counterfactual outcome estimation errors and compare the performance of \modelshortname~with baselines. However, with real-world data, the counterfactual outcome cannot be observed or simulated, thus, we only report the prediction errors of factual outcome. 

\paragraph{Feature distribution shifts.} To achieve a comprehensive evaluation of counterfactual outcome estimation performance, we introduce feature distribution shifts into the datasets. For each dataset, we split based on static characteristics of subjects into a subset in the source domain and a subset in the target domain. Each subset is further divided into train/validation/test sets. 
We summarize the main statistics of datasets in Table~\ref{tab:data-stat} in the appendix.

\paragraph{Baselines.}
We select comprehensive methods for estimating counterfactual outcomes over time as baselines, including MSM~\citep{robins2000marginal}, RMSN~\citep{lim2018forecasting}, CRN~\citep{Bica2020Estimating}, G-Net~\citep{li2021g}, and Causal Transformer (CT)~\citep{melnychuk2022causal}. MSM has difficulty converging when trained with high-dimensional input in semi-synthetic MIMIC-III and M5 datasets and we thus only evaluate it for tumor growth. We empirically find that the balanced representation training losses proposed in CRN and CT do not bring a robust improvement over their variants trained only with the empirical risk minimization (ERM) on factual outcomes. Therefore, we also include these variants, CRN(ERM) and CT(ERM), as baselines.

\subsection{Zero-shot transfer setup}
To showcase cold-start prediction capabilities, in this setup, we focus on the performance on the target domain after training the model in the source domain, with distributional difference in features.
Results are shown in Table \ref{tab:0-shot-merged-results}. \modelshortname~demonstrates the state-of-the-art performance in a majority of horizons across datasets (4/6 in Tumor growth, 6/6 in Semi-synthetic MIMIC-III and M5). On average, \modelshortname~decreases the outcome estimation errors by over 6.2\%, 22.5\% and 26.3\% compared to baselines. Results demonstrate the strong transferability of \modelshortname~in the zero-shot transfer setting.

\begin{table*}[!h]
\centering
\caption{Results of zero-shot transfer setup for multi-step outcome estimation. We report the mean $\pm$ standard deviation of Rooted Mean Squared Errors (RMSEs~$\downarrow$) over 5 runs. \textbf{Bold}: the best results. \underline{Underline}: the 2nd best results.}
\label{tab:0-shot-merged-results}
\resizebox{\textwidth}{!}{

\begin{tabular}{@{}l|l|cccccc|c|c@{}}
\toprule
Dataset & Method & $\tau=1$ & $\tau=2$ & $\tau=3$ & $\tau=4$ & $\tau=5$ & $\tau=6$ & Avg & Gain(\%) \\ \midrule
\multirow{8}{*}{\begin{tabular}[c]{@{}l@{}}Tumor\\ growth\end{tabular}} & MSM & \meanstd{1.0515}{0.0674}&	\textbf{\meanstd{0.5048}{0.0591}}&	\textbf{\meanstd{0.7583}{0.0831}}&	\underline{\meanstd{0.9685}{0.1066}}&	\underline{\meanstd{1.1561}{0.1243}}&	\meanstd{1.3372}{0.1356}& \underline{\meanstd{0.9627}{0.0923}} & 6.2\% \\
& RMSN & \meanstd{1.2406}{0.1301} & \meanstd{1.0914}{0.0346} & \meanstd{1.1315}{0.0634} & \meanstd{1.1583}{0.0810} & \meanstd{1.1674}{0.0913} & \underline{\meanstd{1.1531}{0.0919}} & \meanstd{1.1571}{0.0660} & 22.0\%\\
& CRN(ERM) & \meanstd{1.2924}{0.0772} & \meanstd{1.1769}{0.1058} & \meanstd{1.1728}{0.1136} & \meanstd{1.1906}{0.1106} & \meanstd{1.1997}{0.1061} & \meanstd{1.1883}{0.0985} & \meanstd{1.2035}{0.0901} & 25.0\%\\
& CRN & \meanstd{1.6047}{0.0487}&	\meanstd{2.0846}{0.1665}&	\meanstd{2.0963}{0.1274}	&\meanstd{2.1574}{0.1255}	&\meanstd{2.2609}{0.0569}	&\meanstd{2.3704}{0.1368}	&\meanstd{2.0957}{0.0514} & 68.9\%\\
& CT(ERM) & \underline{\meanstd{0.9729}{0.0718}} & \meanstd{1.0217}{0.0292} & \meanstd{1.1173}{0.0457} & \meanstd{1.1904}{0.0395} & \meanstd{1.2359}{0.0618} & \meanstd{1.2913}{0.0939} & \meanstd{1.1383}{0.0251} & 20.7\%\\
& CT & \meanstd{1.0272}{0.1077} & \meanstd{1.1428}{0.2182} & \meanstd{1.2708}{0.2471} & \meanstd{1.3608}{0.2681} & \meanstd{1.4166}{0.2935} & \meanstd{1.4322}{0.3138} & \meanstd{1.2751}{0.2326} & 29.2\%\\
& G-Net & \meanstd{1.0492}{0.0529} &\meanstd{1.0125}{0.0767}& \meanstd{1.1271}{0.0876} & \meanstd{1.2153}{0.0777} & \meanstd{1.2549}{0.0727} & \meanstd{1.2543}{0.0678} & \meanstd{1.1522}{0.0537} & 21.7\%\\
\cmidrule(l){2-10}
& \modelshortname & \textbf{\meanstd{0.8767}{0.0492}}&	\underline{\meanstd{0.7995}{0.0853}}&	\underline{\meanstd{0.8282}{0.0801}}&	\textbf{\meanstd{0.9021}{0.1062}}&	\textbf{\meanstd{0.9888}{0.1280}}&	\textbf{\meanstd{1.0210}{0.1168}} & \textbf{\meanstd{0.9027}{0.0814}} & (-)\\ 
\midrule
\multirow{8}{*}{\begin{tabular}[c]{@{}l@{}}Semi-\\synthetic\\MIMIC-III\end{tabular}} 
& RMSN & \meanstd{0.2551}{0.0303} & \meanstd{0.6641}{0.1092} & \meanstd{0.9107}{0.1915} & \meanstd{1.1217}{0.2916} & \meanstd{1.2821}{0.3603} & \meanstd{1.3950}{0.4038} & \meanstd{0.9381}{0.2210} & 44.7\%\\
& CRN(ERM) & \underline{\meanstd{0.2506}{0.0303}} & \meanstd{0.5545}{0.0917} & \meanstd{0.7581}{0.1112} & \meanstd{0.9018}{0.1547} & \meanstd{1.0113}{0.1941} & \meanstd{1.1068}{0.2324} & \meanstd{0.7639}{0.1238} & 32.1\%\\
& CRN & \meanstd{0.4041}{0.0537} & \meanstd{0.8256}{0.1767} & \meanstd{1.0439}{0.1958} & \meanstd{1.1807}{0.1725} & \meanstd{1.3121}{0.2229} & \meanstd{1.4374}{0.3089} & \meanstd{1.0340}{0.1606} & 49.8\%\\
& CT(ERM) & \meanstd{0.2762}{0.0804} & \underline{\meanstd{0.5397}{0.1181}} & \underline{\meanstd{0.6765}{0.1417}} & \underline{\meanstd{0.7728}{0.1636}} & \underline{\meanstd{0.8451}{0.1850}} & \underline{\meanstd{0.9028}{0.2070}} & \underline{\meanstd{0.6688}{0.1472}} & 22.5\%\\
& CT & \meanstd{0.3138}{0.0458} & \meanstd{0.5992}{0.0492} & \meanstd{0.7576}{0.0694} & \meanstd{0.8695}{0.0921} & \meanstd{0.9510}{0.1118} & \meanstd{1.0128}{0.1274} & \meanstd{0.7506}{0.0797} & 30.9\%\\
& G-Net & \meanstd{0.5514}{0.1502} & \meanstd{0.9398}{0.2384} & \meanstd{1.2461}{0.3321} & \meanstd{1.4985}{0.4024} & \meanstd{1.7045}{0.4463} & \meanstd{1.8731}{0.4660} & \meanstd{1.3022}{0.3367} & 60.2\%\\ \cmidrule(l){2-10}
& \modelshortname & \textbf{\meanstd{0.2266}{0.0249}}&\textbf{\meanstd{0.4501}{0.0893}}&\textbf{\meanstd{0.5406}{0.0987}}&\textbf{\meanstd{0.5964}{0.1020}}&\textbf{\meanstd{0.6344}{0.1040}}& \textbf{\meanstd{0.6637}{0.1052}} & \textbf{\meanstd{0.5186}{0.0869}} & (-)\\
\midrule

\multirow{8}{*}{M5} 
& RMSN & \meanstd{15.1616}{2.0027} & \meanstd{13.9966}{0.5316} & \meanstd{13.4899}{1.2632} & \meanstd{13.5162}{1.7437} & \meanstd{13.8004}{2.0637} & \meanstd{14.3366}{2.3891} & \meanstd{13.8280}{1.5526} & 47.5\%\\
& CRN(ERM) & \meanstd{9.8859}{1.2980} & \meanstd{20.8199}{3.9049} & \meanstd{38.2653}{8.9897} & \meanstd{59.4192}{16.2788} & \meanstd{82.9515}{26.1928} & \meanstd{105.8120}{35.5325} & \meanstd{61.4536}{17.5760} & 88.2\%\\
& CRN & \meanstd{8.1119}{0.3183} & \meanstd{10.3741}{2.2616} & \meanstd{12.9356}{3.1588} & \meanstd{15.4168}{3.8002} & \meanstd{18.1382}{4.6750} & \meanstd{21.1337}{5.4694}& \meanstd{15.5997}{3.7687} & 53.5\%\\
& CT(ERM) & \meanstd{7.1253}{0.5777} & \meanstd{8.3438}{1.0313} & \meanstd{9.2014}{1.4146} & \meanstd{9.9409}{1.7572} & \meanstd{10.6726}{2.1718} & \meanstd{11.3597}{2.5966}& \meanstd{9.9037}{1.7852} & 26.7\%\\
& CT & \underline{\meanstd{7.1239}{0.5770}} & \underline{\meanstd{8.2939}{0.9702}} & \underline{\meanstd{9.1465}{1.3397}} & \underline{\meanstd{9.9091}{1.7198}} & \underline{\meanstd{10.6311}{2.0328}} & \underline{\meanstd{11.3032}{2.4185}}& \underline{\meanstd{9.8568}{1.6959}} & 26.3\%\\ 
& G-Net & \meanstd{7.5358}{0.1605}&	\meanstd{8.6077}{0.3166}&	\meanstd{9.7167}{0.4861}&	\meanstd{10.8993}{0.6902}&	\meanstd{12.3477}{0.8940}&	\meanstd{13.8200}{1.1193}& \meanstd{10.4879}{0.6078}& 30.8\%  \\ \cmidrule(l){2-10}
& \modelshortname & \textbf{\meanstd{6.4054}{0.0547}}& \textbf{\meanstd{6.9328}{0.0634}}& \textbf{\meanstd{7.2428}{0.0700}}& \textbf{\meanstd{7.4585}{0.0580}}& \textbf{\meanstd{7.7012}{0.0627}}& \textbf{\meanstd{7.8278}{0.0651}}& \textbf{\meanstd{7.2614}{0.0609}}& (-) \\
\bottomrule
\end{tabular}

}
\end{table*}


\subsection{Data-efficient transfer learning setup}
Effectively utilizing small amount of target domain data can be important, and we showcase that it is indeed one of the key strengths of the proposed approach.

To demonstrate this, for the Tumor Growth Dataset, we fine-tune each method trained on the source domain with 100 sequences from the target domain. For the semi-synthetic MIMIC-III and M5 datasets, we set the number of target domain samples for fine-tuning to be 10\% of the number of samples of the target domain in the original dataset. To achieve a fair comparison, we fine-tune each method until it reaches the lowest factual outcome estimation error on a separate validation set in the target domain.

Table~\ref{tab:few-shot-merged-results} compares the performance of all methods in data-efficient transfer learning setup. For the majority of horizons (4/6 in Tumor growth, 5/6 in Semi-synthetic MIMIC-III and M5), we observe that \modelshortname~achieves the state-of-the-art performance after fine-tuning. Again, \modelshortname~reduces the outcome estimation errors by at least 7.8\%, 9.9\% and 4.11\% in the three datasets respectively.
\begin{table*}[!h]
\centering
\caption{Results of the data-efficient transfer learning setup for multi-step outcome estimation. We report the mean $\pm$ standard deviation of Rooted Mean Squared Errors (RMSEs~$\downarrow$) over 5 runs. \textbf{Bold}: the best results. \underline{Underline}: the 2nd best results.}
\label{tab:few-shot-merged-results}
\resizebox{\textwidth}{!}{

\begin{tabular}{@{}l|l|cccccc|c|c@{}}
\toprule
Dataset & Method & $\tau=1$ & $\tau=2$ & $\tau=3$ & $\tau=4$ & $\tau=5$ & $\tau=6$ & Avg & Gain(\%) \\ \midrule
\multirow{8}{*}{\begin{tabular}[c]{@{}l@{}}Tumor\\ growth\end{tabular}} & MSM & \meanstd{1.0436}{0.0671}&	\textbf{\meanstd{0.5023}{0.0588}}&	\textbf{\meanstd{0.7475}{0.0829}}&	\underline{\meanstd{0.9537}{0.1060}}&	\meanstd{1.1376}{0.1233}&	\meanstd{1.3146}{0.1338} & \underline{\meanstd{0.9499}{0.0915}} & 7.8\% \\
& RMSN & \meanstd{1.1839}{0.0842} & \meanstd{1.0912}{0.0405} & \meanstd{1.1215}{0.0593} & \meanstd{1.1538}{0.0688} & \meanstd{1.1728}{0.0773} & \meanstd{1.1740}{0.0830} & \meanstd{1.1495}{0.0529} & 23.8\%\\
& CRN(ERM) & \meanstd{1.2648}{0.0689} & \meanstd{1.1740}{0.1015} & \meanstd{1.1507}{0.1016} & \meanstd{1.1474}{0.1070} & \meanstd{1.1414}{0.1041} & \meanstd{1.1203}{0.0906} & \meanstd{1.1664}{0.0786} & 24.9\%\\
& CRN & \meanstd{1.5019}{0.0587}	&\meanstd{1.5362}{0.0248}	&\meanstd{1.7824}{0.1060}	&\meanstd{1.9842}{0.1707} &\meanstd{2.1317}{0.2431}	&\meanstd{2.2546}{0.3414}	&\meanstd{1.8651}{0.1318} & 53.0\%\\
& CT(ERM) & \underline{\meanstd{0.8947}{0.0668}} & \meanstd{0.8700}{0.0857} & \meanstd{0.9507}{0.1309} & \meanstd{1.0031}{0.1502} & \underline{\meanstd{1.0371}{0.1545}} & \underline{\meanstd{1.0668}{0.1565}} & \meanstd{0.9704}{0.1098} & 9.7\%\\
& CT & \meanstd{0.9545}{0.0782} & \meanstd{0.9494}{0.1597} & \meanstd{1.0225}{0.1562} & \meanstd{1.1062}{0.1377} & \meanstd{1.1455}{0.1192} & \meanstd{1.1562}{0.0953} & \meanstd{1.0557}{0.1136} & 17.0\%\\
& G-Net & \meanstd{1.0335}{0.0622} & \meanstd{1.0154}{0.1100} & \meanstd{1.1105}{0.1476} & \meanstd{1.1859}{0.1620} & \meanstd{1.2257}{0.1693} & \meanstd{1.2198}{0.1508} & \meanstd{1.1318}{0.1118} & 22.6\%\\ \cmidrule(l){2-10}
& \modelshortname & \textbf{\meanstd{0.8654}{0.0328}}&	\underline{\meanstd{0.7945}{0.0532}}&	\underline{\meanstd{0.8248}{0.0751}}&	\textbf{\meanstd{0.8754}{0.0987}}&	\textbf{\meanstd{0.9378}{0.1176}}&	\textbf{\meanstd{0.9594}{0.1062}} & \textbf{\meanstd{0.8762}{0.0720}} & (-)\\ 
\midrule
\multirow{8}{*}{\begin{tabular}[c]{@{}l@{}}Semi-\\synthetic\\MIMIC-III\end{tabular}} 
& RMSN & \underline{\meanstd{0.2100}{0.0192}} & \meanstd{0.6084}{0.1114} & \meanstd{0.7745}{0.1180} & \meanstd{0.8908}{0.1402} & \meanstd{0.9776}{0.1505} & \meanstd{1.0440}{0.1529} & \meanstd{0.7509}{0.1123} & 31.0\%\\
& CRN(ERM) & \textbf{\meanstd{0.1946}{0.0158}} & \underline{\meanstd{0.4770}{0.0808}} & \underline{\meanstd{0.5983}{0.0923}} & \underline{\meanstd{0.6786}{0.1004}} & \underline{\meanstd{0.7315}{0.1047}} & \underline{\meanstd{0.7690}{0.1070}} & \underline{\meanstd{0.5748}{0.0823}} & 9.9\%\\
& CRN & \meanstd{0.2955}{0.0256} & \meanstd{0.5051}{0.0748} & \meanstd{0.6361}{0.0786} & \meanstd{0.7277}{0.0783} & \meanstd{0.7919}{0.0764} & \meanstd{0.8379}{0.0759} & \meanstd{0.6324}{0.0656} & 18.1\%\\
& CT(ERM) & \meanstd{0.2704}{0.0631} & \meanstd{0.5347}{0.1061} & \meanstd{0.6712}{0.1252} & \meanstd{0.7679}{0.1433} & \meanstd{0.8402}{0.1607} & \meanstd{0.8968}{0.1784} & \meanstd{0.6635}{0.1279} & 21.9\%\\
& CT & \meanstd{0.3105}{0.0459} & \meanstd{0.5840}{0.0633} & \meanstd{0.7414}{0.0887} & \meanstd{0.8530}{0.1157} & \meanstd{0.9348}{0.1392} & \meanstd{0.9974}{0.1608} & \meanstd{0.7368}{0.0971} & 29.7\%\\
& G-Net & \meanstd{0.3814}{0.0556} & \meanstd{0.6519}{0.0856} & \meanstd{0.8183}{0.1122} & \meanstd{0.9413}{0.1365} & \meanstd{1.0359}{0.1592} & \meanstd{1.1117}{0.1795} & \meanstd{0.8234}{0.1191} & 37.1\%\\ \cmidrule(l){2-10}
& \modelshortname & \meanstd{0.2288}{0.0229}&	\textbf{\meanstd{0.4496}{0.0877}}&	\textbf{\meanstd{0.5393}{0.0962}}&	\textbf{\meanstd{0.5946}{0.0990}}&	\textbf{\meanstd{0.6326}{0.1013}}&	\textbf{\meanstd{0.6626}{0.1026}}& \textbf{\meanstd{0.5179}{0.0844}} & (-)\\
\midrule
\multirow{8}{*}{M5} 
& RMSN & \meanstd{13.9705}{0.3867}&	\meanstd{13.6233}{0.8150}&	\meanstd{13.3291}{1.2900}&	\meanstd{13.1984}{1.3892}&	\meanstd{13.0889}{1.2605}&	\meanstd{13.0108}{1.1173} & \meanstd{13.2501}{1.1696}& 45.92\%\\
& CRN(ERM) & \meanstd{6.3558}{0.0594} & \meanstd{7.0530}{0.0433} & \meanstd{7.3452}{0.0447} & \meanstd{7.5541}{0.0392} & \meanstd{7.7636}{0.0450} & \meanstd{7.9247}{0.0561}& \meanstd{7.5281}{0.0447} & 4.82\%\\
& CRN & \meanstd{6.2868}{0.0471} & \meanstd{7.0282}{0.0482} & \meanstd{7.3327}{0.0610} & \underline{\meanstd{7.5378}{0.0521}} & \underline{\meanstd{7.7492}{0.0586}} & \underline{\meanstd{7.9094}{0.0676}}& \meanstd{7.5115}{0.0572} & 4.60\%\\
& CT(ERM) & \textbf{\meanstd{6.1720}{0.0354}}&	\underline{\meanstd{6.9309}{0.0571}}&	\underline{\meanstd{7.2855}{0.0889}}&	\meanstd{7.5418}{0.1191}&	\meanstd{7.7839}{0.1283}&	\meanstd{7.9425}{0.1430}&	\meanstd{7.4969}{0.1058}& 4.42\%\\
& CT & \underline{\meanstd{6.2041}{0.0252}}&	\meanstd{7.0022}{0.0372}&	\meanstd{7.3675}{0.0513}&	\meanstd{7.6394}{0.0894}&	\meanstd{7.8932}{0.1153}&	\meanstd{8.0701}{0.1456}&	\meanstd{7.5945}{0.0845}& 5.65\%\\
& G-Net & \meanstd{6.7077}{0.1006}& \meanstd{7.0479}{0.1069}& \meanstd{7.3872}{0.1349}& \meanstd{7.6545}{0.1596}& \meanstd{7.9188}{0.1800}& \meanstd{8.1186}{0.2058}& \underline{\meanstd{7.4725}{0.1461}}& 4.11\% \\
\cmidrule(l){2-10}
& \modelshortname & \meanstd{6.3026}{0.0519}&	\textbf{\meanstd{6.8364}{0.0560}}&	\textbf{\meanstd{7.1464}{0.0674}}&	\textbf{\meanstd{7.3634}{0.0619}}&	\textbf{\meanstd{7.6058}{0.0640}}&	\textbf{\meanstd{7.7393}{0.0637}}& \textbf{\meanstd{7.1656}{0.0592}}& (-) \\
\bottomrule
\end{tabular}

}
\end{table*}


\subsection{Ablation studies}
\vspace{-1ex}
\label{sec:ablation}
\begin{table*}[htbp]
\centering
\caption{Ablation studies for multi-step outcome estimation. We report the mean $\pm$ standard deviation of Rooted Mean Squared Errors (RMSEs~$\downarrow$) over 5 runs. \textbf{Bold}: the best results.}
\label{tab:abla-all-results}
\resizebox{\textwidth}{!}{

\begin{tabular}{@{}l|l|l|cccccc|c|c@{}}
\toprule
Dataset & Component & Choice & $\tau=1$ & $\tau=2$ & $\tau=3$ & $\tau=4$ & $\tau=5$ & $\tau=6$ & Avg & Gain(\%) \\ \midrule
\multirow{3}{*}{\begin{tabular}[c]{@{}l@{}}Semi-\\synthetic\\MIMIC-III\end{tabular}} & \multicolumn{2}{c|}{\modelshortname} & \meanstd{0.2266}{0.0249}&\meanstd{0.4501}{0.0893}&\textbf{\meanstd{0.5406}{0.0987}}&\textbf{\meanstd{0.5964}{0.1020}}&\textbf{\meanstd{0.6344}{0.1040}}& \meanstd{0.6637}{0.1052} & \meanstd{0.5186}{0.0869}& (-)\\
\cmidrule(l){2-11}
 & \multirow{4}{*}{Encoder} & \begin{tabular}[c]{@{}l@{}}w/ VT\end{tabular} & \meanstd{0.4897}{0.0888}&	\meanstd{0.6161}{0.1139}&	\meanstd{0.6978}{0.1200}&	\meanstd{0.7428}{0.1217}&	\meanstd{0.7705}{0.1196}&	\meanstd{0.7910}{0.1182}& \meanstd{0.6846}{0.1130}& 24.2\%\\
& & w/ CT & \meanstd{0.3519}{0.0584}&	\meanstd{0.4936}{0.0897}&	\meanstd{0.5762}{0.0967}&	\meanstd{0.6279}{0.1017}&	\meanstd{0.6634}{0.1045}&	\meanstd{0.6874}{0.1029}& \meanstd{0.5667}{0.0919}& 8.5\%\\
& & TB only & \meanstd{0.2729}{0.0409} & \meanstd{0.4711}{0.0836} & \meanstd{0.5607}{0.0937} & \meanstd{0.6160}{0.0995} & \meanstd{0.6553}{0.1038} & \meanstd{0.6831}{0.1043} & \meanstd{0.5432}{0.0856} & 	4.5\%\\
& & FB only & \meanstd{1.1210}{0.0827} & \meanstd{1.1287}{0.0855} & \meanstd{1.1755}{0.1076} & \meanstd{1.2055}{0.1251} & \meanstd{1.2296}{0.1418} & \meanstd{1.2547}{0.1566} & \meanstd{1.1858}{0.1155} &	56.3\%\\

\cmidrule(l){2-11}
& \multirow{1}{*}{FPE} & w/ abs & \meanstd{0.2981}{0.0444} & \meanstd{0.4679}{0.0940} & \meanstd{0.5561}{0.1050} & \meanstd{0.6091}{0.1079} & \meanstd{0.6446}{0.1115} & \meanstd{0.6694}{0.1128}& \meanstd{0.5409}{0.0951} & 4.1\%\\
\cmidrule(l){2-11}
& \multirow{2}{*}{SSL Loss} & none & \meanstd{0.2998}{0.0466} & \meanstd{0.4718}{0.0905} & \meanstd{0.5579}{0.1007} & \meanstd{0.6117}{0.1035} & \meanstd{0.6460}{0.1060} & \meanstd{0.6680}{0.1060}& \meanstd{0.5425}{0.0914} & 4.4\%\\ 
& & w/o comp & \meanstd{0.2884}{0.0421} & \meanstd{0.4603}{0.0898} & \meanstd{0.5475}{0.1032} & \meanstd{0.6013}{0.1077} & \meanstd{0.6353}{0.1079} & \textbf{\meanstd{0.6610}{0.1075}}& \meanstd{0.5323}{0.0926} & 2.6\%\\
\cmidrule(l){2-11}
& \multirow{2}{*}{SupL Loss} & w/ uni & \meanstd{0.2910}{0.0355} & \meanstd{0.4656}{0.0873} & \meanstd{0.5547}{0.0998} & \meanstd{0.6094}{0.1039} & \meanstd{0.6440}{0.1059} & \meanstd{0.6681}{0.1070}& \meanstd{0.5884}{0.1006} & 11.9\%\\
& & w/ sq.inv. & \textbf{\meanstd{0.1968}{0.0148}} & \textbf{\meanstd{0.4456}{0.0815}} & \meanstd{0.5415}{0.0906} & \meanstd{0.6021}{0.0954} & \meanstd{0.6431}{0.0956} & \meanstd{0.6761}{0.0948}& \textbf{\meanstd{0.5175}{0.0780}} & -0.2\%\\
\cmidrule(l){2-11}
& \multirow{1}{*}{Decoder} & w/ autoreg & \meanstd{0.2049}{0.0118}& \meanstd{0.7036}{0.1422}& \meanstd{1.0234}{0.2214}& \meanstd{1.2023}{0.2745}& \meanstd{1.4692}{0.3711}& \meanstd{1.6577}{0.4959}& \meanstd{1.0435}{0.2437}& 50.3\%\\
\midrule
\multirow{3}{*}{\begin{tabular}[c]{@{}l@{}}M5\end{tabular}} & \multicolumn{2}{c|}{\modelshortname} & \meanstd{6.4054}{0.0547}& \textbf{\meanstd{6.9328}{0.0634}}& \textbf{\meanstd{7.2428}{0.0700}}& \textbf{\meanstd{7.4585}{0.0580}}& \textbf{\meanstd{7.7012}{0.0627}}& \textbf{\meanstd{7.8278}{0.0651}}& \textbf{\meanstd{7.2614}{0.0609}}& (-) \\ \cmidrule(l){2-11}
 & \multirow{4}{*}{Encoder} & \begin{tabular}[c]{@{}l@{}}w/ VT\end{tabular} & \meanstd{17.8226}{4.7807}&	\meanstd{17.6769}{4.5561}&	\meanstd{17.5937}{4.4219}&	\meanstd{17.5279}{4.2936}&	\meanstd{17.4113}{4.1662}&	\meanstd{17.2706}{4.0453}& \meanstd{17.5505}{4.3765}& 58.6\%\\
 & & w/ CT & \meanstd{6.7386}{0.2326}&	\meanstd{7.1911}{0.2474}&	\meanstd{7.4549}{0.2322}&	\meanstd{7.6524}{0.2061}&	\meanstd{7.8488}{0.2021}&	\meanstd{7.9589}{0.2006}& \meanstd{7.4741}{0.2176}& 2.8\%\\
 & & TB only & \meanstd{6.4085}{0.0538} & \meanstd{6.9547}{0.0535} & \meanstd{7.2673}{0.0453} & \meanstd{7.4825}{0.0388} & \meanstd{7.7167}{0.0380} & \meanstd{7.8328}{0.0430} & \meanstd{7.2771}{0.0409} & 	0.2\%\\
 & & FB only & \meanstd{6.8805}{0.0333} & \meanstd{7.6298}{0.0212} & \meanstd{7.9706}{0.0254} & \meanstd{8.1215}{0.0298} & \meanstd{8.3989}{0.0411} & \meanstd{8.5303}{0.0435} & \meanstd{7.9219}{0.0311} &	8.3\%\\
 \cmidrule(l){2-11}
& \multirow{1}{*}{FPE} & w/ abs & \meanstd{6.4089}{0.0693} & \meanstd{6.9648}{0.0617}& \meanstd{7.2776}{0.0528}& \meanstd{7.4834}{0.0430}& \meanstd{7.7214}{0.0421}& \meanstd{7.8479}{0.0370}& \meanstd{7.2840}{0.0486}& 0.3\%\\
\cmidrule(l){2-11}
& \multirow{2}{*}{SSL Loss} & none & \meanstd{6.4296}{0.1193} & \meanstd{6.9434}{0.0796} & \meanstd{7.2548}{0.0699} & \meanstd{7.4744}{0.0753} & \meanstd{7.7117}{0.0728} & \meanstd{7.8429}{0.0817}& \meanstd{7.2761}{0.0827}& 0.2\%\\ 
& & w/o comp & \meanstd{6.4637}{0.0926} & \meanstd{6.9847}{0.0764} & \meanstd{7.2934}{0.0705} & \meanstd{7.5093}{0.0661} & \meanstd{7.7497}{0.0669} & \meanstd{7.8748}{0.0608}& \meanstd{7.3126}{0.0715}& 0.7\%\\
\cmidrule(l){2-11}
& \multirow{2}{*}{SupL Loss} & w/ sq.inv. & \textbf{\meanstd{6.3425}{0.0461}} & \meanstd{6.9760}{0.0523} & \meanstd{7.3170}{0.0538} & \meanstd{7.5366}{0.0614} & \meanstd{7.8015}{0.0658} & \meanstd{7.9439}{0.0736}& \meanstd{7.3196}{0.0583}& 0.8\% \\
& & w/ inv & \meanstd{6.3575}{0.0473} & \meanstd{6.9427}{0.0381} & \meanstd{7.2766}{0.0422} & \meanstd{7.4932}{0.0447} & \meanstd{7.7431}{0.0531} & \meanstd{7.8785}{0.0628} & \meanstd{7.2819}{0.0464}& 0.3\%\\
\cmidrule(l){2-11}
& \multirow{1}{*}{Decoder} & w/ autoreg & \meanstd{6.3572}{0.0621}& $>20$&$>20$&$>20$&$>20$&$>20$ & $>20$ & $>60\%$\\
\bottomrule
\end{tabular}
}
\end{table*}

We conduct ablation studies in the zero-shot transfer setup to validate the design of \modelshortname. We choose the feature-rich datasets: semi-synthetic MIMIC-III and M5 since they contain complex dynamics and thus are more viable for evaluating components capturing temporal and feature-wise interactions.

\underline{\textbf{Encoder.}}
To validate the impact of our proposed encoder architecture, we replace the it with the following variants: (1) Vanilla Transformer \textbf{(w/ VT)}. A vanilla transformer with temporal causal attention, which takes the history with all features concatenated as multivariate time series input. (2) CT \textbf{(w/ CT)}. The encoder architecture proposed by Causal Transformer~\citep{melnychuk2022causal} that concatenates features grouped by covariates/treatments/outcomes into 3 subsets first, then applies self-attention/cross-attention among sequences with each group of features/each pair of feature groups in an alternating way.
(3) Temporal Attention Block only (\textbf{TB only}). The variant that only includes temporal attention blocks but not feature-wise attention blocks. (4) Feature-wise Attention Block only (\textbf{FB only}). The variant that only includes feature-wise attention blocks.

Rows ``Encoder $\vert$ w/VT(w/CT)" in Table~\ref{tab:abla-all-results} demonstrate the superior performance of our proposed encoder architecture. We observe that both methods (CT, \modelshortname) processing features respectively outperform VT that simply concatenates all features, marking the importance of explicitly modeling feature interactions. Moreover, the finer-grained modeling of feature interactions between each pair of features in \modelshortname~further improves the estimation performance compared to the coarser modeling of interactions between feature subsets in CT.

Rows ``Encoder $\vert$ TB only(FB only)'' in Table~\ref{tab:abla-all-results} show that the temporal attention blocks are the most critical for temporal outcome estimation, while the feature-wise attention blocks further boost the performance by 4.5\% and 0.2\% on Semi-synthetic MIMIC-III and M5 datasets respectively.

\underline{\textbf{Feature positional encoding (FPE).}}
We replace the tree-based feature positional encoding with its absolute variant (\textbf{w/abs}): each feature maps to a separate learnable encoding vector. We observe that the tree-based positional encoding has gains of 4.1\% and 0.3\% over the absolute variant in the two datasets respectively.

\underline{\textbf{Self-supervised loss (SSL).}}
To validate the improvement brought by introducing self-supervised learning as well as the choice of its training loss, we compare \modelshortname~with two variants: (i) \textbf{none.} A model with the same architecture as \modelshortname~but trained with factual estimation losses only; and (ii) \textbf{w/o comp.} with vanilla MoCo v3 training loss in Eq.~\ref{eq:overall-con-loss} for self-supervised learning. Rows ``SSL Loss $\vert$ none(w/o comp)" in Table~\ref{tab:abla-all-results} compare the estimation performance of the aforementioned choices of self-supervised learning losses and validate the effectiveness of our component-wise contrastive loss in self-supervised learning.

\underline{\textbf{Supervised loss (SupL).}}
We consider different choices of the hyperparameter in the supervised training loss of Eq. \ref{eq:suploss}: (1) \textbf{w/ uni}: a uniform weight with each $w_i = 1/\tau$; (2) \textbf{w/ inv}: weights in proportion to the inverse of horizon $w_i = \frac{1/i}{\sum_{j=1}^\tau 1/j}$; (3) \textbf{w/ sq.inv.}: weights in proportion to the inverse of the squared horizon $w_i \ =\frac{1/i^2}{\sum_{j=1}^\tau 1/j^2}$. Both (2) and (3) are designed to enhance the short-term outcome estimation performance. We select the weights by validation error for each dataset (w/inv for Semi-synthetic MIMIC-III and w/uni for M5), and compare it to the other two variants. While the relative performance order varies across datasets, all variants can outperform the best baseline results in Table~\ref{tab:0-shot-merged-results}. 

\underline{\textbf{Decoder.}}
We validate the effectiveness of our non-autoregressive design of the decoder and compare it with an autoregressive alternative (\textbf{w/ autoreg}) by including the previous outcome in input features. While results in rows "Decoder $\vert$ w/ autoreg" show good performance in very short horizons ($\tau=1$), multistep outcome estimation errors quickly diverges with the horizon increasing.

\section{Conclusion}
In this work, we propose a self-supervised learning framework - \modelfullname~- to tackle the challenges associated with accurately estimating treatment outcomes over time using observed history, which is a crucial component in areas where randomized controlled trials (RCTs) are not feasible. By integrating self-supervised learning and the Transformer-based encoder combining temporal with feature-wise attention, we've achieved notable advances in estimation accuracy and cross-domain generalization performance.

\bibliography{iclr2024_conference}
\bibliographystyle{iclr2024_conference}

\appendix

\section{Model Architecture}
\label{sec:detail-encoder}
\paragraph{Input feature projection.}
Assume the concatenation of time-varying variables $(\bar{\mX}_t, \bar{\mA}_{t-1}, \bar{\mY}_t)$ in history as $\bar{\mS}_t \in\mathbb{R}^{t\times d_S}$, $d_S = d_X + d_A + d_Y$, and the static variables $\mV\in\mathbb{R}^{d_V}$. We adopt a linear transformation $f_\text{input}: \mathbb{R} \rightarrow \mathbb{R}^{d_\text{model}}$ to map $\bar{\mS}_t$ and $\mV$ to the embedding space as $\mE^S\in\mathbb{R}^{t\times d_S\times d_\text{model}}$ and $\mE^V\in\mathbb{R}^{d_V\times d_\text{model}}$~respectively, where
\begin{align}
    &\mE^S[i,j] = f_\text{input}(\bar{\mS}_t[i,j]),~1\leq i\leq t,~1\leq j\leq d_S;\\
    &\mE^V[j] = f_\text{input}(\mV[j]),~1\leq j\leq d_V.
\end{align}

\paragraph{Feature positional encoding.}
A shared feature projection function among all features is not sufficient to encode the feature-specific information since the same scalar value represents different semantics in different features. Meanwhile, feature-specific information is also critical for modeling the interactions among features. Therefore we enhance the input embedding with a positional encoding along the feature dimension. Since the features can be grouped into covariates, treatments and outcomes and form a hierarchical structure with 2 levels, we model it with a learnable tree positional encoding~\citep{shiv2019novel,wang2021tuta}. Denote the lists of covariate, treatment, outcome, and static features as $\mF_X$, $\mF_A$, $\mF_Y$, and $\mF_V$. For the $i$-th feature $f_i^\mF$ in a certain feature list $\mF\in\{\mF_X, \mF_A, \mF_Y, \mF_V\}$, its positional encoding is:
\begin{equation}
\begin{array}{l}
    \mE_{\text{fea\_pos}}(f_i^\mF) = \mE_{\text{fea}}\cdot\text{Concat}(\ve_\mF, \ve_i),~\text{where}\\
    \ve_\mF = \left\lbrace
    \begin{array}{l}
    (1, 0, 0, 0)~\text{if $\mF$ is $\mF_X$}\\
    (0, 1, 0, 0)~\text{if $\mF$ is $\mF_A$}\\
    (0, 0, 1, 0)~\text{if $\mF$ is $\mF_Y$}\\
    (0, 0, 0, 1)~\text{if $\mF$ is $\mF_V$}
    \end{array}
    \right.,
\end{array}
\end{equation}
$\ve_i\in\mathbb{R}^{\max(d_X, d_A, d_Y, d_V)}$ is a one-hot vector with only $\ve_i[i] = 1$. $\mE_\text{fea}\in\mathbb{R}^{d_\text{model}\times (4 + \max(d_X, d_A, d_Y, d_V))}$ are learnable tree embedding weights. After obtaining the stacked feature positional embeddings $\mE_\text{fea\_pos}^S\in\mathbb{R}^{d_S\times d_\text{model}}$, $\mE_\text{fea\_pos}^V\in\mathbb{R}^{d_V\times d_\text{model}}$ of time-varying and static features, we broadcast them to the shape of $\mE^S$ and $\mE^V$ respectively along the time dimension. The embedded input is then the sum of input feature projection and feature positional encoding:
\begin{align}
    \mZ^{S,(0)} &= \mE^S + \text{Broadcast}(\mE^S_{\text{fea\_pos}}),\\
    \mZ^{V,(0)} &= \mE^V + \text{Broadcast}(\mE^V_{\text{fea\_pos}}).
\end{align}

\paragraph{Temporal attention block.}
The temporal attention block is designed to capture the temporal dependencies within each feature. We construct the block based on the self-attention part in the conventional Transformer decoder~\citep{vaswani2017attention}.
Considering the importance of relative time interval in modeling treatment outcomes, we adopt the relative positional encoding~\citep{shaw2018self,melnychuk2022causal} along the time dimension.

The temporal attention block in the $l$-th layer receives $\mZ^{S,(l-1)}$ from the previous layer, reshapes it to $d_S$ sequences with lengths $t$, and passes them through the block in parallel. The outputs $\mZ^{S, (l)}_{\text{tmp}}$ have the same shape as $\mZ^{S,(l-1)}$. Since $\mZ^{V,(l-1)}$ is static, we only pass it through the point-wise feed-forward module and get $\mZ^{V, (l)}_{\text{tmp}}$

\paragraph{Feature-wise attention block.}
The feature-wise attention block models interactions among different features. We reuse the architecture of the conventional Transformer encoder but replace the positional encoding with the feature positional encoding as described. The block in the $l$-th layer receives $\mZ^{S, (l)}_{\text{tmp}}$ from the temporal attention block and reshapes it to $t$ sequences, each with a length $d_S$. We broadcast $\mZ^{V, (l)}_{\text{tmp}}$ and concatenate it with each sequence along the feature dimension to enable the attention among both time-varying and static features. The concatenated $t$ sequences that we apply attention to are:
\begin{equation}
\begin{aligned}
    \mZ^{SV,(l)}_\text{tmp} &= \text{Concat}(\mZ^{S, (l)}_{\text{tmp}}, \text{Broadcast}(\mZ^{V, (l)}_{\text{tmp}}))\\
    &\in\mathbb{R}^{t\times (d_S + d_V)\times d_\text{model}}.
\end{aligned}
\end{equation}
We apply full attention across all features and get $\mZ^{SV, (l)}$ with the same shape as $\mZ^{SV,(l)}_\text{tmp}$. The propagated embeddings of time-varying features are obtained as:
\begin{equation}
    \mZ^{S,(l)} = \mZ^{SV,(l)}[:, :d_S, :]\in\mathbb{R}^{t\times d_S\times d_\text{model}}.
\end{equation}
To keep the $\mZ^{V,(l)}$ static after feature-wise attention, we only propagate $\mZ^{V,(l)}_\text{tmp}$ with full attention among static features only. The updated embeddings of the static features $\mZ^{V,(l)}\in\mathbb{R}^{d_V\times d_\text{model}}$ have the same shape as $\mZ^{V,(l)}_\text{tmp}$.

\section{Identifiability assumptions}
\label{sec:identifiability}
\begin{assumption}[Consistency]
\label{asm:consistency}
The potential outcome of any treatment $\va_t$ is always the same as the factual outcome when a subject is given the treatment $\va_t$: $\vy_{t+1}[\va_t] = \vy_{t+1}$.
\end{assumption}

\begin{assumption}[Positivity]
If $P(\bar{\mA}_{t-1} = \bar{\va}_{t-1}, \bar{\mX}_t = \bar{\vx}_t) \neq 0$, then $P(\mA_t = \va_t \vert \bar{\mA}_{t-1} = \bar{\va}_{t-1}, \bar{\mX}_t = \bar{\vx}_t) > 0$ for any $\bar{\va}_t$.
\end{assumption}

\begin{assumption}[Sequential strong ignorability]
$\mY_{t+1}[\va_t] \indep \mA_t \vert \bar{\mA}_{t-1}, \bar{\mX}_t, \forall \va_t, t$.
\end{assumption}

\section{Why use temporally causal attention in \modelshortname?}
The embeddings of time-varying features $\mZ^{S,(L)}\in\mathbb{R}^{t\times (d_X+d_A+d_Y)\times d_\text{model}}$ from the final layer is re-organized to stepwise representations $(\mZ_1, \mZ_2, \dots, \mZ_t)$. Each $\mZ_{t'}=\{\vz_{t'}^i\in\mathbb{R}^{d_\text{model}}\}_{i=1}^{d_X+d_A+d_Y}$ is further aggregated to $\vz^X_{t'}, \vz^A_{t'}, \vz^Y_{t'}, \vz_{t'}$ in Equation~\ref{eq:rep-agg}. When the encoder satisfies the temporal causality (i.e. $\mZ_{t'}$ only depends on $\bar{\mH}_{t'}$), they can be seen as a sequence of representations for the observed history $\bar{\mH}_1, \bar{\mH}_2, \dots, \bar{\mH}_t$ truncated at each time step.

When we feed the encoder with factual data in training stages, a major advantage of encoders satisfying temporal causality is that we can estimate the outcomes and evaluate the factual estimation losses in every time step of the input sequence at a single forward pass. Evaluating of counterfactual data, the encoder only needs to keep $\mZ_t$ representing the entire observed history, conditioning on which the predictor rolls out outcome estimations given counterfactual treatments.

In contrast, feeding the entire history in one pass for training is error-prone for architectures and can violate the temporal causality (e.g. transformers with fully temporal attention or frequency-based methods), since it leaks information of future steps into the representations in previous steps. Predictors trained with such representations converge quickly to a trivial model that simply copies future steps as estimations. When we evaluate counterfactual data where future counterfactual outcomes are no longer available in input, the performance degenerates.
As a result, we have to explicitly unroll the observed sequence to $t$ truncated sequences and run $t$ forward passes to get the representations and factual errors in every step when training non-temporally-causal models. This leads to a $\times T$  increase in training time when the batch size remains unchanged due to hardware restrictions, where $T$ is the maximum length of sequences in training data.
We have $T\geq 50$ in our experiments and find that none of the architecture violating temporal causality can finish training in a reasonable time.

\section{Proof of generalization bound of contrastive learning in counterfactual outcome estimation}
\label{sec:proof}

\subsection{Preliminaries}
\paragraph{Positive pairs.}
Pairs of semantically related/similar data samples are positive pairs in contrastive learning. In contrastive learning, positive pairs are commonly generated by applying randomized transformation on the same input~\citep{he2020momentum,woo2022cost}.

For exposition simplicity, we assume the set of factual observed history $\mathcal{H}$ is a finite but large dataset of size $N$. We use $P_+$ to denote the distribution of positive pairs. $P_+$ satisfies $P_+(h,h') = P_+(h',h),~\forall~h,h'\in\mathcal{H}$. $P_{\mathcal{H}}$ denotes the marginal distribution of $P_+$: $P_{\mathcal{H}}(h) = \sum_{h'\in\mathcal{H}}P_+(h, h')$.

\paragraph{Positive-pair graph.} Following the definition in \citep{haochen2022beyond}, we introduce the \textit{postive-pair graph} as a weighted undirected graph $G(\mathcal{H}, w)$ with the vertex set $\mathcal{H}$ and the edge weight $w(h, h') = P_+(h, h')$. $w(h) = P_{\mathcal{H}}(h)=\sum_{h'\in\mathcal{H}}w(h, h')$. For any vertex subset $A$, $w(A)=\sum_{h\in A}w(h)$. For any vertex subsets $A,B$, $w(A, B) = \sum_{h\in A, h'\in B}w(h, h')$. For any vertex $h$ and vetex subset $B$, $w(h, B) = w(\{h\}, B)$.

\paragraph{Generalized spectral contrastive loss.}
Let $r: \mathcal{H}\rightarrow \mathbb{R}^k$ be a mapping from the input data to $k$-dimensional features. For the convenience of proof, we consider the (generalized) spectral contrastive loss proposed in \citep{haochen2022beyond}:
\begin{equation}
    \mathcal{L}_\sigma(r) = \E_{(h,h^+)\sim P_+}\left[ \llnorm{r(h) - r(h^+)}^2 \right] + \sigma \cdot R(r),
\end{equation}
where the regularizer is defined as $R(r) = \left\lVert \E_{h\in P_{\mathcal{H}}} [r(h)r(h)^T] - I_{k\times k} \right\rVert_F^2$ and $I_{k\times k}$ is the $k$-dimensional identity matrix. Notice that the InfoNCE loss is more commonly used in empirical study \citep{he2020momentum,chen2020simple,chen2021mocov3} instead of the spectral contrastive loss, and their equivalence is still an open problem with some preliminary results~\citep{tan2023contrastive}. 

\subsection{Definitions and assumptions}
We reiterate the following definitions and assumptions in \citep{haochen2022beyond} for self-containment:
\begin{definition}[Expansion]
Let $A,B$ be two disjoint subsets of $\mathcal{H}$. We denote the expansion, max-expansion and min-expansion from $A$ to $B$ as follows:
\begin{equation}
\begin{aligned}
    \phi(A, B) &= \frac{w(A, B)}{w(A)},\\
    \bar{\phi}(A, B) &= \max_{h\in A}\frac{w(h, B)}{w(h)},\\
    \underline{\phi}(A, B) &= \min_{h\in A}\frac{w(h, B)}{w(h)}.
\end{aligned}
\end{equation}
\end{definition}

\begin{assumption}[Cross-cluster connections]
\label{asm:cross-cluster}
For some $\alpha\in (0, 1)$, we assume that vertices of the positive-pair graph $G(\mathcal{H}, w)$ can be partitioned into $m$ disjoint clusters $C_1, \dots, C_m$ such that for any $i\in[m]$,
\begin{equation}
    \bar{\phi}(C_i, \mathcal{H} \backslash C_i) \leq \alpha.
\end{equation}
\end{assumption}

\begin{assumption}[Intra-cluster conductance]
\label{asm:intra-cluster}
For all $i\in[m]$, assume the conductance of the subgraph restricted to $C_i$ is large, i.e., every subset A of $C_i$ with at most half the size of $C_i$ expands to the rest:
\begin{equation}
    \forall~A\subset C_i~\text{satisfying}~w(A) \leq w(C_i) / 2,~\phi(A, C_i \backslash A) \geq \gamma.
\end{equation}
\end{assumption}


\begin{assumption}[Relative expansion]
\label{asm:relative-expansion}
Let $S$ and $T$ be two disjoint subsets of $\mathcal{H}$, each is formed by $r$ clusters among $C_1, C_2, \dots, C_m$ for $r\leq m/2$. Let $\rho = \min_{i\in [r]}\underline{\phi}(T_i, S_i)$ be the minimum min-expansions from $T_i$ to $S_i$. For some sufficiently large universal constant $c$, we assume that $\rho \geq c\cdot \alpha^2$ and that 
\begin{equation}
    \rho = \min_{i\in[r]}\underline{\phi}(T_i, S_i) \geq c\cdot \max_{i\neq j}\bar{\phi}(T_i, S_j).
\end{equation}
\end{assumption}

\subsection{Proof of Theorem \ref{theorem:upper-bound-cont}}
We adapt the preconditioned featurer averaging classifier in \citep{haochen2022beyond} for regression in our proof:
\begin{algorithm}[h]
    \caption{Preconditioned feature averaging (PFA).}
    \label{alg:pfa}
	\textbf{Require:} Pretrained representation extractor $r$, unlabeled data $P_{\mathcal{H}}$, source domain labeled data $P_{S}$, target domain test data $\tilde{h}$, integer $t\in\sZ^+$, outcome discretization granularity $\epsilon$.
	\begin{algorithmic}[1]
	    \State Compute the preconditioner matrix $\Sigma = \E_{h\in P_{\mathcal{H}}}[r(h)r(h)^T]$.
	    \For {every outcome value $y_i$ corresponding to the cluster $C_i, i\in [r]$}
	        \State Compute the mean feature of outcome $y_i$: $b_i = \E_{(h,y)\sim P_S}[\mathbbm{1}[ \llnorm{y - y_i} \leq \epsilon ] \cdot r(h)]$.
	    \EndFor
	    \State \Return prediction $y_{i^*}$, $i^*=\argmax_{i\in [r]}\langle r(h), \sum^{t-1}b_i \rangle$.
	\end{algorithmic}
\end{algorithm}

For any PFA regressor $f$ constructed with Alg. ~\ref{alg:pfa}, we can transform it to a corresponding classifier by defining its 0-1 classification error on the target domain $T$ as:
\begin{equation}
    \mathcal{E}_{T}^{01}(f) = \E_{(h,y)\sim P_T}[\mathbbm{1} [\llnorm{y - f(h)} > \epsilon]].
\end{equation}

We can directly apply the main result in \citep{haochen2022beyond} and get an upper bound of the 0-1 error on the target domain:
\begin{theorem}[Upper bound of 0-1 error on the target domain \citep{haochen2022beyond}]
Suppose that Assumption~\ref{asm:cross-cluster}, Assumption~\ref{asm:intra-cluster}, and Assumption \ref{asm:relative-expansion} holds for the set of observed history $\mathcal{H}$ and its positive-pair graph $G(\mathcal{H}, w)$, and the representation dimension $k\geq 2m$.  Let $r$ be a minimizer of the generalized spectral contrastive loss and the regression head $f$ be constructed in Alg. \ref{alg:pfa}. We have
\begin{equation}
\label{eq:0-1-bound}
    \mathcal{E}^{01}_T(f) \lesssim \frac{r}{\alpha^2\gamma^4}\cdot\exp(-\Omega(\frac{\rho \gamma^2}{\alpha^2})).
\end{equation}
\end{theorem}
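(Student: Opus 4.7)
The plan is to prove the 0-1 target-domain error bound along the spectral-clustering lines of \citep{haochen2022beyond}, reducing it to three main steps: a spectral characterization of the contrastive minimizer, a perturbation argument that extracts the cluster structure from $\bar{A}$, and a margin analysis of the preconditioned feature averaging classifier on the target domain.

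The first step is to rewrite the generalized spectral contrastive loss as a low-rank matrix approximation problem. Let $\bar{A}[h,h'] = w(h,h')/\sqrt{w(h)w(h')}$ be the symmetrically normalized adjacency matrix of $G(\mathcal{H},w)$. With $R$ the $|\mathcal{H}|\times k$ matrix whose rows are $r(h)$ and $D = \mathrm{diag}(w(h))$, expanding $\E[\llnorm{r(h)-r(h^+)}^2] = 2\mathrm{tr}(D^{1/2}RR^{\top}D^{1/2}) - 2\mathrm{tr}(D^{1/2}RR^{\top}D^{1/2}\bar{A})$ and combining with the regularizer $R(r)$ shows that, up to additive constants, $\mathcal{L}_\sigma$ is equivalent to the rank-$k$ Frobenius approximation problem $\min_{M\succeq 0,\,\mathrm{rank}\,M\leq k}\|\bar{A}-M\|_F^2$. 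By the Eckart-Young-Mirsky theorem, the minimizer is formed by the top-$k$ eigenvectors of $\bar{A}$, so $r(h) = w(h)^{-1/2}U_k[h,\cdot]$ up to an orthogonal rotation, where $U_k$ collects those eigenvectors.

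The second step is to extract a cluster structure from $\bar{A}$ using the three assumptions. Assumption \ref{asm:cross-cluster} controls the off-block mass of $\bar{A}$: the operator-norm difference between $\bar{A}$ and its block-diagonal restriction $\bar{A}_{\mathrm{blk}}$ is $O(\alpha)$. Assumption \ref{asm:intra-cluster}, combined with Cheeger's inequality applied per block, forces the top eigenvalue of each block to be $1$ and the next to lie below $1 - \Omega(\gamma^2)$. Weyl's inequality then places $m$ eigenvalues of $\bar{A}$ near $1$, separated by $\Omega(\gamma^2)$ from the rest, and the Davis-Kahan theorem yields $\|U_kU_k^{\top} - P_{\mathrm{clust}}\|_{\mathrm{op}} = O(\alpha/\gamma^2)$, where $P_{\mathrm{clust}}$ projects onto $\mathrm{span}\{\mathbbm{1}_{C_i}/\sqrt{w(C_i)}\}_{i=1}^m$. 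Consequently, $r$ decomposes as $r = r_{\mathrm{ideal}} + r_{\mathrm{res}}$ with $r_{\mathrm{ideal}}$ cluster-constant (value $\mu_i$ on $C_i$) and $\|r_{\mathrm{res}}\|_{L^2(P_{\mathcal{H}})} = O(\alpha/\gamma^2)$.

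The third step is to analyze PFA on the target domain. The source mean feature $b_i$ approximates the ideal cluster mean $\mu_i$ up to a residual of order $\alpha/\gamma^2$, and the preconditioner $\Sigma$ (near-identity at the spectral minimizer) normalizes the feature covariance so that cross-cluster inner products are suppressed. For a target point $h \in T_i$, Assumption \ref{asm:relative-expansion} delivers an initial margin $\langle r(h),\Sigma^{-1}b_i\rangle - \max_{j\neq i}\langle r(h),\Sigma^{-1}b_j\rangle = \Omega(\rho)$; the $t$-fold iteration in Algorithm \ref{alg:pfa} amplifies the ratio to $\exp(\Omega(\rho\gamma^2/\alpha^2))$ relative to the residual. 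A Markov-type bound on the $P_T$-mass of points where $\|r_{\mathrm{res}}(h)\|$ exceeds the amplified margin then produces the claimed 0-1 error bound, with the prefactor $r/(\alpha^2\gamma^4)$ arising from the number of classes and the inverse squared spectral gap. The main obstacle will be this third step: tracking how the relative expansion condition quantitatively controls cross-cluster cosine similarities among the $\Sigma^{-1}b_j$'s, and ensuring the power iteration exponentially suppresses the $O(\alpha/\gamma^2)$ residual rather than amplifying it. The exponent $\rho\gamma^2/\alpha^2$ and prefactor $r/(\alpha^2\gamma^4)$ emerge precisely from balancing these three error sources (cross-cluster leakage, intra-cluster gap, and relative expansion).
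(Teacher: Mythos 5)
The first thing to compare against: the paper does not prove this theorem at all. It is imported verbatim from \citep{haochen2022beyond} (``We can directly apply the main result in \citep{haochen2022beyond}''), and the paper's own proof obligations in Appendix~\ref{sec:proof} are limited to the UDA risk decomposition in Eq.~\ref{eq:all-risk} and Lemma~\ref{lemma:l2-0-1} relating the $L^2$ regression error to the 0-1 error. So your proposal is a blind reconstruction of an external result, and it must be judged against the actual argument of \citep{haochen2022beyond}. Your steps 1--2 (Eckart--Young--Mirsky characterization of the spectral contrastive minimizer as top-$k$ eigenvectors of the normalized adjacency $\bar{A}$, then Weyl plus Davis--Kahan to write $r = r_{\mathrm{ideal}} + r_{\mathrm{res}}$ with $\lVert r_{\mathrm{res}}\rVert_{L^2(P_{\mathcal{H}})} = O(\alpha/\gamma^2)$) reproduce the \emph{earlier} within-domain analysis of spectral contrastive learning, and they are fine as far as they go. But they are not the mechanism behind this theorem, and your step 3 --- which you yourself flag as the obstacle --- has a genuine gap rather than a fixable technicality.

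Concretely: Assumption~\ref{asm:relative-expansion} only guarantees $\rho \geq c\cdot\alpha^2$, so your claimed initial margin $\Omega(\rho)$ can be far smaller than the Davis--Kahan residual $O(\alpha/\gamma^2)$; the margin does not survive the perturbation, and a Markov-type bound on $\lVert r_{\mathrm{res}}(h)\rVert$ can only ever yield a \emph{polynomial} error rate $O(\alpha^2/(\gamma^4\,\mathrm{margin}^2))$, never the exponential factor $\exp(-\Omega(\rho\gamma^2/\alpha^2))$. The asserted ``power-iteration amplification'' is also not available in the form you describe: the regularizer $R(r)$ drives $\Sigma = \E_{h\sim P_{\mathcal{H}}}[r(h)r(h)^T]$ toward $I_{k\times k}$, so powers of the preconditioner do not by themselves amplify a signal-to-residual ratio. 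What \citep{haochen2022beyond} actually does is interpret the quantity $\langle r(h), \Sigma^{t-1} b_i\rangle$ in Alg.~\ref{alg:pfa} (note the $\Sigma^{t-1}$ there is the preconditioner raised to the power $t-1$, not an inverse) as an approximation to the $t$-step random-walk mass flowing from $h$ into the labeled cluster, because $r(h)^\top \Sigma^{t-1} r(h')$ approximates entries of $\bar{A}^t$ up to degree scaling. The three assumptions then control the diffusion directly: $\gamma$ bounds intra-cluster mixing time via the spectral gap, $\alpha$ bounds per-step cross-cluster leakage, and $\rho$ forces the matched source cluster $S_i$ to absorb more of $T_i$'s mass than any mismatched $S_j$; optimizing over the walk length $t$ is precisely what produces the exponent $\rho\gamma^2/\alpha^2$. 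Indeed, the title point of that paper (``beyond separability'') is that near-cluster-constant representations obtained from subspace perturbation --- your step 2 --- are \emph{not} what drives linear transferability, so a proof routed entirely through Davis--Kahan plus a margin argument would fail even with more care.
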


\begin{lemma}[Relation between the L2 regression error and 0-1 classification error]
\label{lemma:l2-0-1}
Suppose that both $\llnorm{f(h)}\leq B$ and $\llnorm{y}\leq B,~\epsilon < 2B$. The L2 regression error $\mathcal{E}_T(f)$ of the PFA regressor on the target domain $T$ is bounded by $\mathcal{E}_T^{01}(f)$ as:
\begin{equation}
\label{eq:l2-0-1}
    \mathcal{E}_T(f) \leq \epsilon^2 + (4B^2 - \epsilon^2)\mathcal{E}_T^{01}(f).
\end{equation}
\end{lemma}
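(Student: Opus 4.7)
\textbf{Proof plan for Lemma \ref{lemma:l2-0-1}.}
The plan is to split the expectation defining $\mathcal{E}_T(f)$ into two disjoint events based on whether the pointwise $L_2$ error $\llnorm{f(h)-y}$ lies above or below the discretization threshold $\epsilon$, bound each contribution by a trivial constant, and then identify the probability of the ``above $\epsilon$'' event with the $0$--$1$ error $\mathcal{E}_T^{01}(f)$.

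Concretely, first I would write
\begin{equation*}
\mathcal{E}_T(f) = \E_{(h,y)\sim P_T}\bigl[\llnorm{f(h)-y}^2 \cdot \mathbbm{1}[\llnorm{f(h)-y}\leq \epsilon]\bigr] + \E_{(h,y)\sim P_T}\bigl[\llnorm{f(h)-y}^2 \cdot \mathbbm{1}[\llnorm{f(h)-y}>\epsilon]\bigr].
\end{equation*}
The first summand is immediately bounded by $\epsilon^2 \cdot \Pr[\llnorm{f(h)-y}\leq\epsilon] = \epsilon^2(1 - \mathcal{E}_T^{01}(f))$. For the second summand, the triangle inequality together with the assumptions $\llnorm{f(h)}\leq B$ and $\llnorm{y}\leq B$ gives the deterministic upper bound $\llnorm{f(h)-y}\leq 2B$, so the integrand is at most $4B^2$ on the event $\{\llnorm{f(h)-y}>\epsilon\}$, whose probability is by definition $\mathcal{E}_T^{01}(f)$. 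Adding the two contributions yields
\begin{equation*}
\mathcal{E}_T(f) \leq \epsilon^2(1 - \mathcal{E}_T^{01}(f)) + 4B^2 \cdot \mathcal{E}_T^{01}(f) = \epsilon^2 + (4B^2 - \epsilon^2)\mathcal{E}_T^{01}(f),
\end{equation*}
which is the claimed inequality; the assumption $\epsilon<2B$ only serves to ensure $4B^2-\epsilon^2 > 0$ so that the bound is stated with a non-trivial coefficient.

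There is essentially no subtle obstacle: the argument is a two-case decomposition plus triangle inequality, and the only care needed is to use the \emph{same} threshold $\epsilon$ that appears in the definition of $\mathcal{E}_T^{01}(f)$ (as set by the discretization granularity in Alg.~\ref{alg:pfa}) when splitting the expectation, so that the indicator in the splitting step matches the indicator in the $0$--$1$ error. No properties of $f$ beyond the boundedness $\llnorm{f(h)}\leq B$ are required — in particular, the PFA construction and the contrastive pretraining play no role here; the lemma is a purely analytic comparison between squared-$L_2$ loss and thresholded $0$--$1$ loss for bounded predictors and targets.
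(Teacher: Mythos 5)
Your proposal is correct and is essentially identical to the paper's proof: both split the expectation on the event $\{\llnorm{y-f(h)}>\epsilon\}$, bound the squared error by $\epsilon^2$ on the complement and by $(2B)^2=4B^2$ (via the triangle inequality and the boundedness assumptions) on the event itself, and recombine to get $\epsilon^2+(4B^2-\epsilon^2)\mathcal{E}_T^{01}(f)$. Your closing remarks — that $\epsilon<2B$ only ensures a nonnegative coefficient and that no property of $f$ beyond boundedness is used — are accurate and consistent with the paper's treatment.
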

\begin{proof}
\begin{equation*}
\begin{aligned}
    \mathcal{E}_T(f) =& \E_{(h,y)\in P_T}\llnorm{y - f(h)}^2\\
    \leq& \sum_{(h,y)\in T} P(h, y)\left[ \mathbbm{1}[\llnorm{y - f(h)} > \epsilon] \llnorm{y - f(h)}^2 \right]\\
    &+ \sum_{(h,y)\in T} P(h, y)(1 - \mathbbm{1}[\llnorm{y - f(h)} > \epsilon])\epsilon^2\\
    \leq& \sum_{(h,y)\in T} P(h, y)\left[ \mathbbm{1}[\llnorm{y - f(h)} > \epsilon] 4B^2 \right]\\
    &+ \sum_{(h,y)\in T} P(h, y)(1 - \mathbbm{1}[\llnorm{y - f(h)} > \epsilon])\epsilon^2\\
    =& \epsilon^2 + (4B^2 - \epsilon^2)\E_{(h,y)\in P_T} \mathbbm{1}[\llnorm{y - f(h)} > \epsilon]\\
    =& \epsilon^2 + (4B^2 - \epsilon^2)\mathcal{E}_T^{01}(f).
\end{aligned}
\end{equation*}
\end{proof}

Lemma \ref{lemma:l2-0-1} connects the L2 error and the 0-1 error. Combining Eq.~\ref{eq:all-risk}, Eq.~\ref{eq:0-1-bound}, Eq.~\ref{eq:l2-0-1}, we immediately get Theorem \ref{theorem:upper-bound-cont}.

\section{Dataset description}
\label{sec:dataset-description}
\begin{table*}[htbp]
\centering
\caption{Statistics of datasets.}
\label{tab:data-stat}
\begin{tabular}{@{}lllcc@{}}
\toprule
Dataset & Domain & Property & Seq Length & Train/Validation/Test Seq Num\\ \midrule
\multirow{2}{*}{\begin{tabular}[c]{@{}l@{}}Tumor\\ growth\end{tabular}} & source & $\gamma=10$  & 60 & 10000/1000/1000 \\ \cmidrule(l){2-5}
& target & $\gamma=0$ & 60 & 100/1000/1000 \\
\midrule
\multirow{2}{*}{\begin{tabular}[c]{@{}l@{}}Semi-synthetic\\MIMIC-III\end{tabular}} & source & age in [20,45] & 99 & 3704/926/926 \\ \cmidrule(l){2-5}
& target & age$\geq$85 & 99 & 138/347/1737 \\
\midrule
\multirow{2}{*}{\begin{tabular}[c]{@{}l@{}}M5\end{tabular}} & source & food items & 50 & 39606/7048/7048 \\ \cmidrule(l){2-5}
& target & household items &  50  & 3623/3512/18005 \\
\bottomrule
\end{tabular}
\end{table*}

We summarize the statistics and the way of introducing feature distribution shifts in Table \ref{tab:data-stat}. 
\paragraph{Tumor growth.} We refer readers to \citep{Bica2020Estimating,melnychuk2022causal} for the complete descriptions of the pharmacokinetic-pharmacodynamic (PK-PD) model. Here we focus on how we introduce distribution shifts by adjusting the treatment bias coefficient $\gamma$.

The volume of tumor after $t$ days of diagnosis is:
\begin{equation}
\begin{aligned}
    &V(t+1) \\
    = &(1 + \rho\log(\frac{K}{V(t)}) - \beta_cC(t) - (\alpha_r d(t) + \beta_r d(t)^2) + e_t)\\
    &\cdot V(t),
\end{aligned}
\end{equation}
where $K, \rho, \beta_c, \alpha_r, \beta_r$ are parameters sampled from the prior distributions defined in \citep{geng2017prediction}. $e_t\sim\mathcal{N}(0,0.01^2)$ is the noise term.

PK-PD model constructs time-varying confounding by connecting the probability of assigning chemotherapy and radiotherapy with the outcome - tumor diameter:
\begin{equation}
\begin{aligned}
    p_c(t) &= \sigma(\frac{\gamma_c}{D_{\text{max}}}(\bar{D}(t) - \delta_c)),\\
    p_r(t) &= \sigma(\frac{\gamma_r}{D_{\text{max}}}(\bar{D}(t) - \delta_r)).
\end{aligned}
\end{equation}
$\bar{D}(t)$ is the mean tumor diameter in the past 15 days and $D_{\text{max}}=13$.~$\sigma$ is the sigmoid function. $\delta_c=\delta_r = D_{\text{max}}/2$. $\gamma_c$ and $\gamma_r$ controls the importance of tumor diameter history on treatment assignment, thus control the strength of time-dependent confounding.

In Tumor growth dataset, we set $\gamma_c=\gamma_r = \gamma = 10$ to generate data in the source domain, and $\gamma_c=\gamma_r=\gamma=0$ for the target domain. As a result, both treatment bias and the data distribution of history differs between source and target domains.

\paragraph{Semi-synthetic MIMIC-III.}
We split the semi-synthetic MIMIC-III dataset introduced in \citep{melnychuk2022causal}~ by ages of patients to the source/target domain. More specifically, we generate simulation data from patients with ages falling in $[20,45]$ as the source domain data and simulation based on patients with ages over 85 as the target domain data.
\rebuttal{Missing values in MIMIC-III dataset is imputed with the so-called “Simple Imputation” described in~\cite{wang2020mimic}. Missing values are first forward filled and then set to individual-specific mean if there are no previous values. If the variable is always missing for a patient, we set it to the global mean.}

\paragraph{M5.}
We adapt the M5 forecasting dataset (https://www.kaggle.com/competitions/m5-forecasting-accuracy) for treatment effect estimation over time. In M5, we select the item pricing as treatment, its sales as outcome and all other features as covariates. We aggregate the item sales by week to reduce the sequence length to the same level as the other two datasets for the convenience of evaluation. We also discretize the continuous pricing by mapping $(p_{t,i} - p_{0,i})/p_{0,i}$ to buckets divided by its 20-quantiles, where $p_{t,i}, p_{0,i}$ are the prices of item $i$ at time $t$ and at its initial sale.

To introduce the feature distribution shift, we select 5000 items in the food category as the source domain data and another 5000 items in the household category as the target domain data.

\section{Baselines}
\paragraph{Baseline implementation.}
We reuse the implementation in \citep{melnychuk2022causal} for evaluating all the baselines, including: MSM~\citep{robins2000marginal}, RMSN~\citep{lim2018forecasting}, CRN~\citep{Bica2020Estimating}, G-Net~\citep{li2021g}, and Causal Transformer (CT)~\citep{melnychuk2022causal}.

\paragraph{Hyperparameter tuning.}
For all baselines, we follow the ranges of hyperparameter tuning in \citep{melnychuk2022causal}~and select the hyperparameters with the lowest factual outcome estimation error on the validation set from the source domain.~\rebuttal{For each method and each dataset, the same set of hyperparameters are used in the zero-shot transfer/data-efficient transfer/standard supervised learning settings. The detailed hyperparameters used for baselines and \modelshortname~are listed in the configuration files in our code repository. Here we list the main hyperparameters for reference.}

\rebuttal{\textbf{MSM}. There is no tuneable hyperparameter in MSM.}

\rebuttal{\textbf{RMSN}. We list the hyperparameters of RMSN in Table~\ref{tab:rmsn-hparams}.}
\begin{table*}[htbp]
\centering
\caption{\rebuttal{RMSN hyperparameters.}}
\label{tab:rmsn-hparams}
\begin{tabular}{@{}llccc@{}}
\toprule
                      &                   & Tumor growth & Semi-synthetic MIMIC-III & M5 \\ \midrule
Propensity Treatment & RNN Hidden Units  & 8             &  6                        & 44   \\
                      & Dropout           &  0.5            & 0.1                         & 0.4   \\
                      & Layer Num         &     1         & 2                         & 1   \\
                      & Max Gradient Norm &      1.0        & 0.5                          & 2.0   \\
                      & Batch Size        &         128     & 256                         & 128   \\
                      & Learning Rate     &            0.01  & 0.01                         & 0.001   \\
\midrule
Propensity History   & RNN Hidden Units  & 24             & 74                          & 92   \\
                      & Dropout           & 0.1             & 0.5                         & 0.5   \\
                      & Layer Num         &  1            & 2                         & 2   \\
                      & Max Gradient Norm & 2.0             & 1.0                         & 0.5   \\
                      & Batch Size        &  128            & 64                         & 128   \\
                      & Learning Rate     &     0.01         & 0.001                         & 0.01  \\
\midrule
Encoder   & RNN Hidden Units  & 24             & 74                         & 46   \\
                      & Dropout           & 0.1             & 0.1                         & 0.1   \\
                      & Layer Num         &  1            & 1                         &  2  \\
                      & Max Gradient Norm &  0.5            & 0.5                          & 0.5   \\
                      & Batch Size        & 64             & 1024                          & 128   \\
                      & Learning Rate     &  0.01            & 0.001                         & 0.0001  \\
\midrule
Decoder   & RNN Hidden Units  & 48             & 196                         & 45   \\
                      & Dropout           & 0.1             & 0.1                         & 0.1   \\
                      & Layer Num         &  1            & 1                         & 1   \\
                      & Max Gradient Norm & 0.5             & 0.5                         & 4.0   \\
                      & Batch Size        &  256            & 1024                         &  256  \\
                      & Learning Rate     & 0.0001             & 0.0001                         & 0.0001  \\
\bottomrule
\end{tabular}
\end{table*}

\rebuttal{\textbf{CRN(ERM)}. See Table \ref{tab:crn-erm-hparams}.}
\begin{table*}[htbp]
\centering
\caption{\rebuttal{CRN(ERM) hyperparameters.}}
\label{tab:crn-erm-hparams}
\begin{tabular}{@{}llccc@{}}
\toprule
        &                               & Tumor growth & Semi-synthetic MIMIC-III & M5 \\ \midrule
Encoder & RNN Hidden Units              & 24             & 74                         & 46   \\
        & Balancing Representation Size & 18             & 74                         & 46   \\
        & FC Hidden Units               & 18             & 37                         & 46   \\
        & Layer Num & 1 & 1 & 2 \\
        & Dropout                       & 0.1             & 0.1                         & 0.1   \\
        & Batch Size                    &  256            & 64                         & 128    \\
        & Learning Rate                 &  0.01            & 0.001                         & 0.001    \\
\midrule
Decoder & RNN Hidden Units              &  18            & 74                         & 46   \\
        & Balancing Representation Size &  6            &  98                        & 90   \\
        & FC Hidden Units               &  6            &  98                        & 22   \\
        & Layer Num & 1 & 2 & 1 \\
        & Dropout                       &  0.1            & 0.1                         & 0.1   \\
        & Batch Size                    &  256            & 256                         & 256   \\
        & Learning Rate                 &  0.001            & 0.0001                         & 0.0001   \\
\bottomrule
\end{tabular}
\end{table*}

\rebuttal{\textbf{CRN}. See Table \ref{tab:crn-hparams}.}
\begin{table*}[htbp]
\centering
\caption{\rebuttal{CRN hyperparameters.}}
\label{tab:crn-hparams}
\begin{tabular}{@{}llccc@{}}
\toprule
        &                               & Tumor growth & Semi-synthetic MIMIC-III & M5 \\ \midrule
Encoder & RNN Hidden Units              & 18             & 74                         & 46   \\
        & Balancing Representation Size & 3             & 74                         & 46   \\
        & FC Hidden Units               & 12             & 37                         & 46   \\
        & Layer Num & 1 & 1 & 2 \\
        & Dropout                       & 0.2             & 0.1                         & 0.1   \\
        & Batch Size                    &  256            & 64                         & 128    \\
        & Learning Rate                 &  0.001            & 0.001                         & 0.001    \\
\midrule
Decoder & RNN Hidden Units              &  3            & 74                         & 46   \\
        & Balancing Representation Size &  3            &  98                        & 90   \\
        & FC Hidden Units               &  3            &  98                        & 22   \\
        & Layer Num & 1 & 2 & 1 \\
        & Dropout                       &  0.2            & 0.1                         & 0.1   \\
        & Batch Size                    &  256            & 256                         & 256   \\
        & Learning Rate                 &  0.001            & 0.0001                         & 0.0001   \\
\bottomrule
\end{tabular}
\end{table*}

\rebuttal{\textbf{CT(ERM)}. See Table \ref{tab:ct-erm-hparams}.}
\begin{table*}[htbp]
\centering
\caption{\rebuttal{CT(ERM) hyperparameters.}}
\label{tab:ct-erm-hparams}
\begin{tabular}{@{}lccc@{}}
\toprule
& Tumor growth & Semi-synthetic MIMIC-III & M5 \\ \midrule
Transformer Hidden Units & 12 & 24 & 24  \\
Balancing Representation Size & 2 & 88 & 94 \\
FC Hidden Units & 12 & 44 & 47 \\
Layer Num & 1 & 1 & 2 \\
Head Num & 2 & 3 & 2 \\
Max Relative Position & 15 & 20 & 30 \\
Dropout & 0.1 & 0.1 & 0.1 \\
Batch Size & 64 & 64 & 64 \\
Learning Rate & 0.001 & 0.01 & 0.001 \\
\bottomrule
\end{tabular}
\end{table*}

\rebuttal{\textbf{CT}. See Table \ref{tab:ct-hparams}.}
\begin{table*}[htbp]
\centering
\caption{\rebuttal{CT hyperparameters.}}
\label{tab:ct-hparams}
\begin{tabular}{@{}lccc@{}}
\toprule
& Tumor growth & Semi-synthetic MIMIC-III & M5 \\ \midrule
Transformer Hidden Units & 16 & 24 & 24  \\
Balancing Representation Size & 16 & 88 & 94 \\
FC Hidden Units & 16 & 44 & 47 \\
Layer Num & 1 & 1 & 2 \\
Head Num & 2 & 3 & 2 \\
Max Relative Position & 15 & 20 & 30 \\
Dropout & 0.2 & 0.1 & 0.1 \\
Batch Size & 64 & 64 & 64 \\
Learning Rate & 0.001 & 0.01 & 0.001 \\
\bottomrule
\end{tabular}
\end{table*}

\rebuttal{\textbf{G-Net}. See Table \ref{tab:gnet-hparams}.}
\begin{table*}[htbp]
\centering
\caption{\rebuttal{G-Net hyperparameters.}}
\label{tab:gnet-hparams}
\begin{tabular}{@{}lccc@{}}
\toprule
& Tumor growth & Semi-synthetic MIMIC-III & M5 \\ \midrule
RNN Hidden Units & 24 & 148 & 144 \\
FC Hidden Units & 48 & 74 & 72 \\
Dropout & 0.1 & 0.1 & 0.1 \\
Layer Num & 1 & 1 & 2 \\
Batch Size & 128 & 256 & 256 \\
Learning Rate & 0.001 & 0.01 & 0.001 \\
\bottomrule
\end{tabular}
\end{table*}

\rebuttal{\textbf{\modelshortname}. See Table \ref{tab:ours-hparams}.}
\begin{table*}[htbp]
\centering
\caption{\rebuttal{\modelshortname~hyperparameters.}}
\label{tab:ours-hparams}
\begin{tabular}{@{}llccc@{}}
\toprule
        &                               & Tumor growth & Semi-synthetic MIMIC-III & M5 \\ \midrule
Encoder & Transformer Hidden Units              & 24             & 36                         & 36   \\
        & Encoder Momentum & 0.99             & 0.99                         & 0.99   \\
        & Temperature               & 1.0             & 1.0                         & 1.0   \\
        & Layer Num & 1 & 1 & 2 \\
        & Head Num & 2 & 3 & 2 \\
        & Dropout                       & 0.1             & 0.1                         & 0.1   \\
        & Batch Size                    &  64            & 64                         & 64    \\
        & Learning Rate                 &  0.001            & 0.001                         & 0.001    \\
\midrule
Decoder & Hidden Units              &  128            & 128                         & 128   \\
        & Batch Size                    &  32            & 32                         & 32   \\
        & Learning Rate                 &  0.001            & 0.001                         & 0.001   \\
\bottomrule
\end{tabular}
\end{table*}

\paragraph{Comparison of numbers of model parameters.}
Here we list the number of trainable parameters in each baseline as well as \modelshortname~in the experiments of each dataset.
\begin{table*}[htbp]
\centering
\caption{\rebuttal{Number of trainable parameters.}}
\begin{tabular}{@{}llll@{}}
\toprule
\#trainable params          & Tumor growth & semi-synthetic MIMIC-III & M5    \\ \midrule
MSM & $<$1K & (-) & (-) \\
RMSN                        & 18.8K                   & 387K     & 213K  \\
CRN(ERM)      & 6.5K                    & 164K     & 78K   \\
CRN                         & 2.3K                    & 164K     & 78K   \\
CT(ERM)        & 5.2K                    & 45K      & 80.3K \\
CT                          & 9.4K                    & 45K      & 80.3K \\
G-Net & 3.4K & 151K & 323K\\
\modelshortname         & 20.7K                   & 43.6K    & 77.5K\\
\bottomrule
\end{tabular}%
\end{table*}

\section{Results of supervised learning setup}
Table~\ref{tab:noncold-merged-results} shows the performance in standard supervised learning setting, with both train and test data from the source domain. Overall, \modelshortname~outperforms other baselines in tumor growth and semi-synthetic MIMIC-III datasets. With M5, \modelshortname~also shows comparable performance to the CT(ERM) with a 1.3\% relative difference.

\begin{table*}[!h]
\centering
\caption{Results in standard supervised learning setting, with source and target datasets coming from the same distribution for multi-step outcome estimation. We report the mean +- standard deviation of Rooted Mean Squared Errors (RMSEs) over 5 runs. \textbf{Bold}: the best results. \underline{Underline}: the 2nd best results.}
\label{tab:noncold-merged-results}
\resizebox{\textwidth}{!}{

\begin{tabular}{@{}l|l|cccccc|c|c@{}}
\toprule
Dataset & Method & $\tau=1$ & $\tau=2$ & $\tau=3$ & $\tau=4$ & $\tau=5$ & $\tau=6$ & Avg & Gain(\%) \\ \midrule
\multirow{8}{*}{\begin{tabular}[c]{@{}l@{}}Tumor\\ growth\end{tabular}}  & MSM & \meanstd{5.8368}{0.6157}& \textbf{\meanstd{2.0400}{0.6719}}&	\textbf{\meanstd{3.0385}{0.9990}}&	\underline{\meanstd{3.8701}{1.2736}}&	\underline{\meanstd{4.6173}{1.5246}}&	\meanstd{5.3823}{1.7839} & \underline{\meanstd{4.1308}{1.1211}} & 12.3\%\\
& RMSN & \meanstd{4.8388}{0.7770}&	\meanstd{5.4447}{1.9202}&	\meanstd{5.9261}{2.1096}&	\meanstd{5.9817}{2.1270}&	\meanstd{5.8705}{2.0544}&	\meanstd{5.5461}{1.8865} & \meanstd{5.6013}{1.7727}& 35.4\%\\
& CRN(ERM)&	\meanstd{5.1601}{0.5222}&	\meanstd{6.0784}{2.3196}&	\meanstd{6.4721}{2.4221}&	\meanstd{6.6142}{2.4206}&	\meanstd{6.5648}{2.3455}&	\meanstd{6.2939}{2.1955}& \meanstd{6.1972}{2.0226}& 41.6\%\\
& CRN& \meanstd{4.8130}{0.2296}	&\meanstd{6.3126}{2.9523}	& \meanstd{6.6993}{3.8805}	&\meanstd{6.7520}{3.8551}&	\meanstd{6.8386}{3.5630}	&\meanstd{6.8852}{3.1150}	&\meanstd{6.3834}{2.8863}	& 43.3\%\\
& CT(ERM)&	\meanstd{5.1286}{1.3377}&	\meanstd{5.7262}{2.7601}&	\meanstd{6.5085}{2.9886}&	\meanstd{6.9248}{3.0009}&	\meanstd{7.1971}{2.9346}&	\meanstd{7.2369}{2.7570}& \meanstd{6.4537}{2.5904}& 43.9\%\\
& CT&	\meanstd{6.5485}{1.5221}&	\meanstd{7.5382}{2.8528}&	\meanstd{7.9030}{2.9569}&	\meanstd{7.9828}{2.9332}&	\meanstd{7.8244}{2.8075}&	\meanstd{7.4418}{2.6103}& \meanstd{7.5398}{2.5976}& 52.0\%\\
& G-Net & \meanstd{3.9371}{0.4023}&	\meanstd{3.7697}{1.1861}&	\meanstd{4.6054}{1.4181}&	\meanstd{4.9730}{1.4773}&	\meanstd{5.0491}{1.4410}&	\underline{\meanstd{4.8745}{1.3153}} & \meanstd{4.5348}{1.1778} & 20.1\%\\
\cmidrule(l){2-10}
& \modelshortname & \textbf{\meanstd{3.7403}{0.3695}}&	\underline{\meanstd{3.0067}{0.9065}}&	\underline{\meanstd{3.4619}{1.1557}}	&\textbf{\meanstd{3.8501}{1.3127}}	&\textbf{\meanstd{3.9160}{1.3142}}	&\textbf{\meanstd{3.7525}{1.1493}} & \textbf{\meanstd{3.6212}{1.0040}} & (-)\\
\midrule
\multirow{8}{*}{\begin{tabular}[c]{@{}l@{}}Semi-\\synthetic\\MIMIC-III\end{tabular}} 
& RMSN&	\underline{\meanstd{0.2107}{0.0261}}&	\meanstd{0.5352}{0.0842}&	\meanstd{0.6722}{0.1096}&	\meanstd{0.7669}{0.1203}&	\meanstd{0.8309}{0.1280}&	\meanstd{0.8764}{0.1331}& \meanstd{0.6487}{0.0976}& 21.7\%\\
& CRN(ERM)&	\textbf{\meanstd{0.1951}{0.0202}}&	\meanstd{0.4426}{0.0799}&	\meanstd{0.5530}{0.0859}&	\underline{\meanstd{0.6113}{0.0842}}&	\underline{\meanstd{0.6478}{0.0828}}&	\underline{\meanstd{0.6708}{0.0819}} & \underline{\meanstd{0.5201}{0.0713}} & 2.4\%\\
& CRN&	\meanstd{0.3276}{0.0301}&	\meanstd{0.5234}{0.0839}&	\meanstd{0.6531}{0.0985}&	\meanstd{0.7234}{0.0985}&	\meanstd{0.7618}{0.0921}&	\meanstd{0.7825}{0.0854}& \meanstd{0.6286}{0.0801}& 19.2\%\\
& CT(ERM)&	\meanstd{0.2130}{0.0164}&	\meanstd{0.4426}{0.0766}&	\meanstd{0.5495}{0.0836}&	\meanstd{0.6191}{0.0851}&	\meanstd{0.6669}{0.0856}&	\meanstd{0.7010}{0.0834}& \meanstd{0.5320}{0.0695}& 4.6\%\\
& CT&	\meanstd{0.2175}{0.0178}&	\underline{\meanstd{0.4421}{0.0757}}&	\underline{\meanstd{0.5458}{0.0854}}&	\meanstd{0.6161}{0.0925}&	\meanstd{0.6670}{0.0993}&	\meanstd{0.7047}{0.1040}& \meanstd{0.5322}{0.0765}& 4.6\%\\
& G-Net & \meanstd{0.3418}{0.0290} & \meanstd{0.6015}{0.0653} & \meanstd{0.7542}{0.0758} & \meanstd{0.8620}{0.0825} & \meanstd{0.9429}{0.0875} & \meanstd{1.0035}{0.0915}& \meanstd{0.7510}{0.0686}& 32.4\%\\
\cmidrule(l){2-10}
& \modelshortname & \meanstd{0.2286}{0.0265}&	\textbf{\meanstd{0.4417}{0.0876}}&	\textbf{\meanstd{0.5288}{0.0957}}&	\textbf{\meanstd{0.5825}{0.0991}}&	\textbf{\meanstd{0.6190}{0.1018}}&	\textbf{\meanstd{0.6458}{0.1030}}& \textbf{\meanstd{0.5077}{0.0848}}& (-) \\\midrule
\multirow{8}{*}{M5}
& RMSN&	\meanstd{35.7795}{4.3603}&	\meanstd{33.2570}{2.3870}&	\meanstd{33.4138}{4.0678}	&\meanstd{33.4169}{4.4289}&	\meanstd{33.3104}{4.4017}&	\meanstd{33.3819}{4.1602}& \meanstd{33.7599}{3.9379}& 52.2\%\\
& CRN(ERM) &	\meanstd{13.8445}{0.1550}	&\meanstd{15.7926}{0.1278}&	\meanstd{16.7071}{0.2240}&	\meanstd{17.0887}{0.1724}&	\meanstd{17.2709}{0.0923}&	\meanstd{17.9759}{0.0880}& \meanstd{16.4466}{0.1367} & 1.8\%\\
& CRN&	\underline{\meanstd{13.5907}{0.0859}}&	{\meanstd{15.5242}{0.0692}}&	\meanstd{16.2694}{0.1157}&	\underline{\meanstd{16.7355}{0.0719}}&	\underline{\meanstd{17.0095}{0.0388}}&	\underline{\meanstd{17.6874}{0.0558}}& \underline{\meanstd{16.1361}{0.0673}}& -0.1\%\\
& CT(ERM) &	\textbf{\meanstd{13.4887}{0.1335}}&	\underline{\meanstd{15.3397}{0.2922}}	&\meanstd{16.3415}{0.4096}&	\meanstd{17.0545}{0.5603}&	\meanstd{17.4828}{0.5609}&
\meanstd{18.5832}{0.4930}& \textbf{\meanstd{15.9414}{0.3804}}&-1.3\%\\
& CT&	\meanstd{13.6721}{0.3574}&	\meanstd{15.9384}{1.0910}&	\meanstd{17.2781}{1.6049}&	\meanstd{18.1796}{1.8134}&	\meanstd{18.8805}{2.2159}&
\meanstd{19.2510}{1.9642}& \meanstd{16.7897}{1.4144}& 3.8\%\\
& G-Net & \meanstd{13.7187}{0.0833}& \textbf{\meanstd{14.9851}{0.1205}}& \textbf{\meanstd{15.9578}{0.1701}}& \meanstd{16.8278}{0.2229}& \meanstd{17.4833}{0.3111}& \meanstd{18.1665}{0.3795}& \meanstd{16.1898}{0.2070}& 0.3\% \\
\cmidrule(l){2-10}
& \modelshortname & \meanstd{14.2556}{0.1792}&	\meanstd{15.6151}{0.2287}&	\underline{\meanstd{16.1743}{0.2076}}&	\textbf{\meanstd{16.4791}{0.1276}}&	\textbf{\meanstd{16.9037}{0.1322}}&	\textbf{\meanstd{17.4379}{0.1845}}& \meanstd{16.1443}{0.1742}& (-) \\
\bottomrule
\end{tabular}
}
\end{table*}

\section{Visualization of the Learned Representations}
\begin{figure}[htbp]
    \centering
    \includegraphics[width=\linewidth]{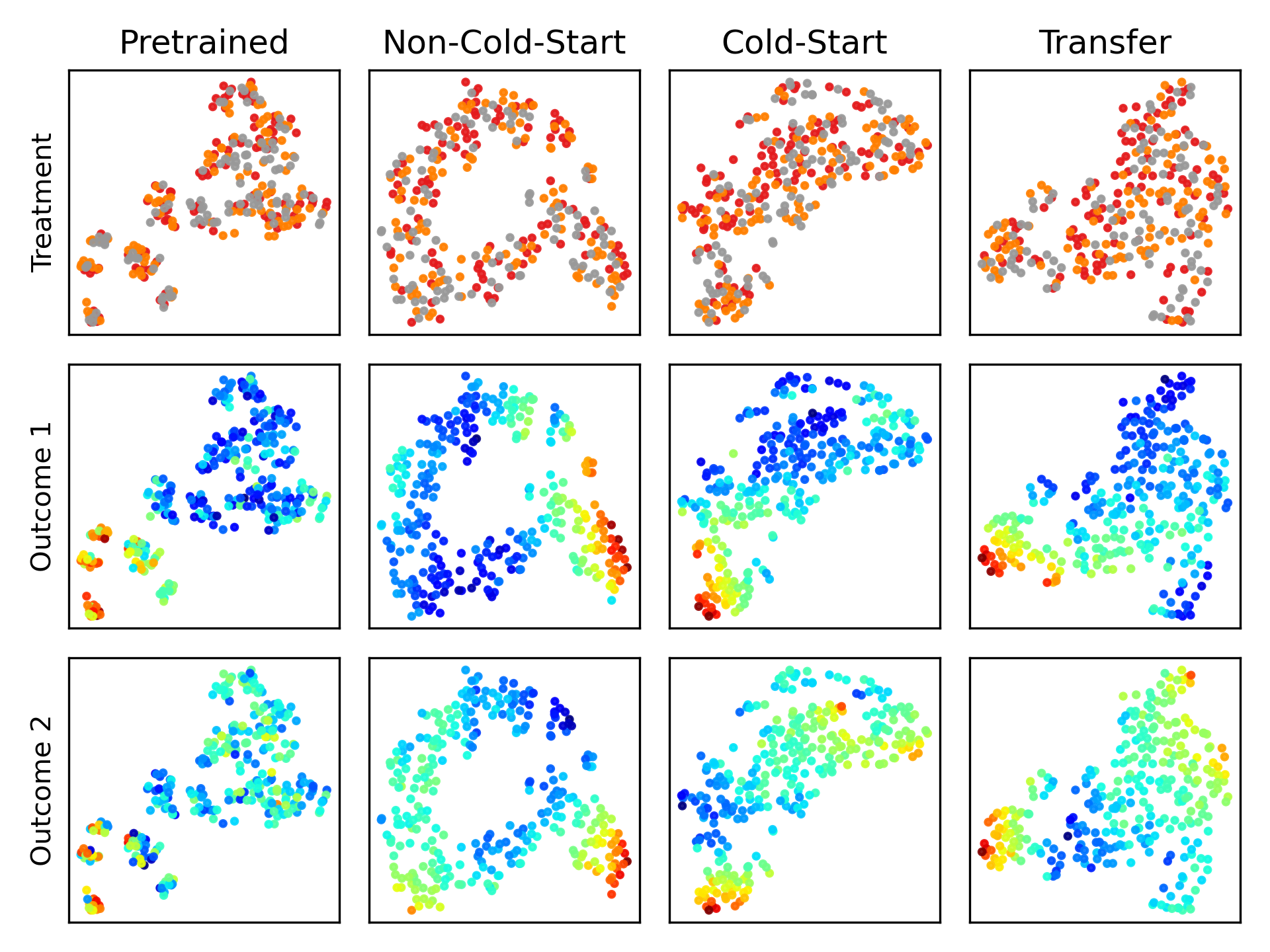}
    \caption{T-SNE visualization of learned representations in Semi-synthetic MIMIC-III dataset.
    }
    \label{fig:vis-rep}
\end{figure}
Fig.~\ref{fig:vis-rep} depicts the representations learned for the Semi-synthetic MIMIC-III dataset after each of the 4 stages: (a) \textbf{Pretrained}: representations after the self-supervised learning stage of source data. (b) \textbf{Non-Cold-Start}: representations fine-tuned with factual outcome estimation loss of source data. (c) \textbf{Cold-Start}: representations of target data when directly applying the encoder trained in (b). 
(d) \textbf{Transfer}: representations of target data after fine-tuned with small amount of target data. We use T-SNE to map each representation to a 2D space and color each point with values of its upcoming treatment and outcomes.

As shown in the first row, representations with different types of upcoming treatments overlap, indicating that the learned representations after each stage are balanced towards treatments.
In the second and the third rows, we observe clusters of representations corresponding to similar outcome values, which indicates that the learned representations are informative about the upcoming outcomes, even including the representations trained only with self-supervised loss (column ``Pretrained"). Such clustered structures also persist when moving from the source domain data (column ``Non-Cold-Start") to the target domain data (``Cold-Start"), showing that the learned representations can generalize to cold-start cases.

\section{Examples of counterfactual treatment outcome estimation}
\begin{figure*}[!h]
    \centering
    \includegraphics[width=\textwidth]{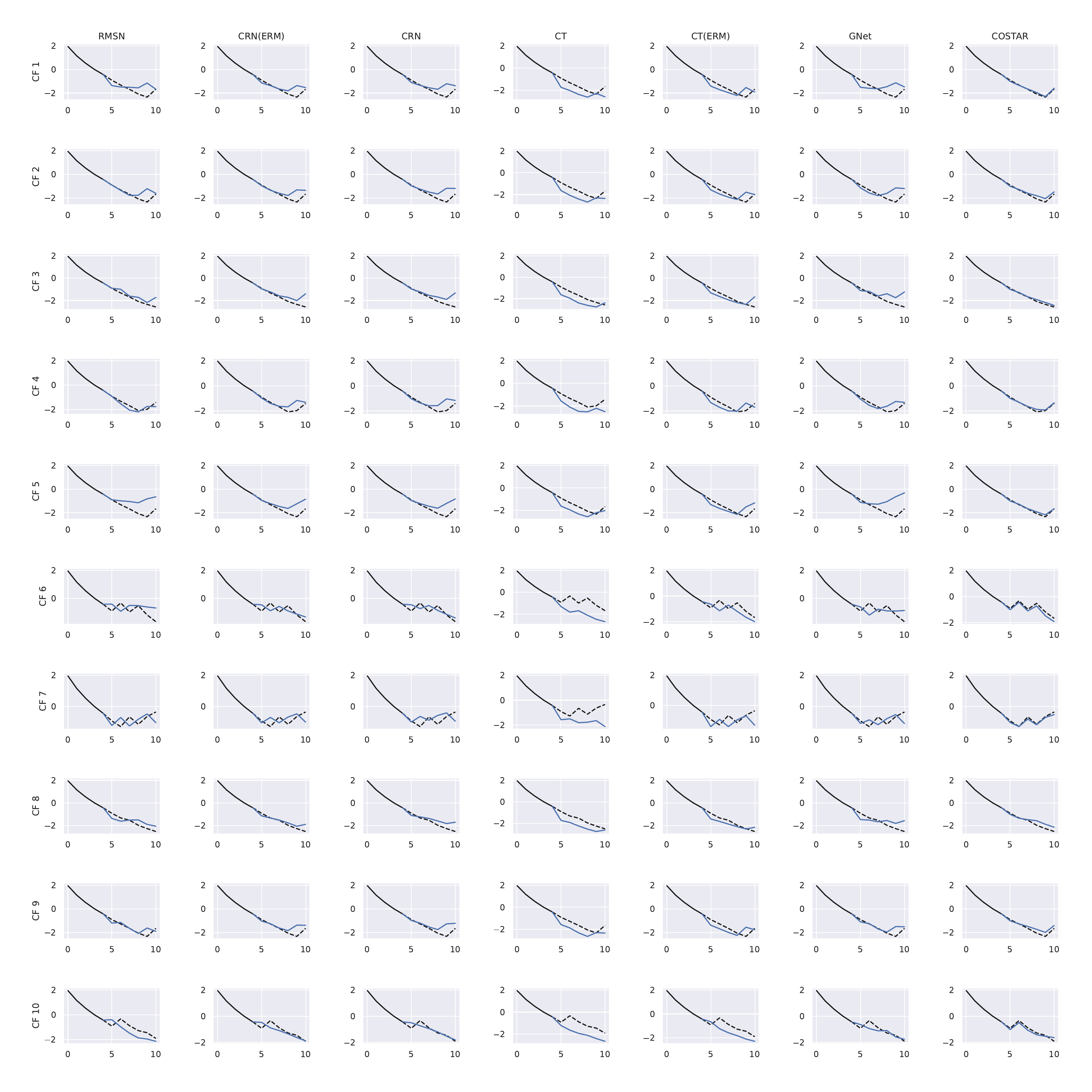}
    \caption{Examples of counterfactual treatment outcome estimation with semi-synthetic MIMIC-III data in the zero-shot transfer setting. We plot one of the two output dimensions for clarity. Each row lists the results of a counterfactual treatment sequence, while each column shows the estimations of one method across all treatment sequences tested. In each sub-figure, the observed historical outcomes are plotted in black solid lines, and the ground truth counterfactual outcomes in black dash lines. The blue solid lines show the estimated outcomes. }
    \label{fig:example-cf}
\end{figure*}

Fig.~\ref{fig:example-cf}~qualitatively compare the counterfactual outcome estimation performance differences between \modelshortname~and baselines in the zero-shot transfer setting. We randomly select a sequence from the observed data until time $t=4$ (x-axis), then apply sequences of treatments sampled uniformly (i.e. no treatment bias) and simulate the step-wise outcomes for 10 times as the ground truth. We compare the ground truth of each simulation with all methods tested with semi-synthetic MIMIC-III dataset. In Fig.~\ref{fig:example-cf}~ we find that the gaps between estimations and ground truth outcomes are obvious in columns of baseline results. Instead, they closely match each other in the estimation results (the rightmost column) given by \modelshortname, demonstrating its superior performance.

\end{document}